\documentclass[journal,onecolumn]{IEEEtran}
\usepackage[T1]{fontenc}
\usepackage{microtype,mathtools,amssymb,amsthm,booktabs,array,enumitem}
\usepackage[numbers,sort&compress]{natbib}
\usepackage[ruled,vlined]{algorithm2e}
\usepackage{xcolor,hyperref}
\title{Parameter-Free Interval-Dynamic Regret under Heavy-Tailed Noise}
\author{Vaneet Aggarwal%
\thanks{Vaneet Aggarwal is with Purdue University, West Lafayette, IN, USA.}}
\newcommand{\E}{\mathbb E}
\newcommand{\R}{\mathbb R}
\newcommand{\X}{\mathcal X}
\newcommand{\F}{\mathcal F}
\newcommand{\DD}{\mathcal D}

\newcommand{\Reg}{\operatorname{Regret}}

\newcommand{\ind}{\mathbf 1}

\newcommand{\curv}{\mathfrak h}
\newcommand{\defect}{\mathfrak d}
\newcommand{\clip}{\operatorname{clip}}

\newcommand{\EE}{\mathcal E}

\DeclareMathOperator{\KL}{KL}
\newtheorem{theorem}{Theorem}
\newtheorem{lemma}[theorem]{Lemma}
\newtheorem{proposition}[theorem]{Proposition}
\newtheorem{corollary}[theorem]{Corollary}
\numberwithin{equation}{section}
\numberwithin{theorem}{section}
\setlist[enumerate]{leftmargin=*,itemsep=3pt,topsep=5pt}
\SetAlgoLined\LinesNumbered\DontPrintSemicolon
\SetKwInput{KwInput}{Input}
\SetAlFnt{\small}\SetAlCapFnt{\small\bfseries}
\hypersetup{hidelinks,pdftitle={Parameter-Free Interval-Dynamic Regret under Heavy-Tailed Noise},pdfauthor={Vaneet Aggarwal}}
\renewcommand{\paragraph}[1]{\par\medskip\noindent\textbf{#1}\enspace}

\begin{document}
\maketitle
\begin{abstract}
We study online convex optimization with one unbiased stochastic subgradient per round and an unknown finite conditional $p$th noise moment, $1<p\le2$. For every fixed interval $I$ of length $n$ and comparator path with $\Lambda_I=1+P_I/D$, one learner achieves
\[
\E\Reg_I(u)\le\min\!\left\{GDn,\ C\!\left[GD\sqrt{n(\Lambda_I+\log^2(2T))}
+\sigma Dn^{1/p}(\Lambda_I+\log^2(2T))^{(p-1)/p}\right]\right\}.
\]
The learner uses none of $G,\sigma,p,I,P_I$, and the constant is universal. Interval adaptation adds to comparator complexity, preserving the distinct mean-gradient and noise exponents. The analysis controls calibration in expectation and limits the cost of observation-scale changes. Its general theorem compares to distributions over predictably available experts with relative-entropy dependence on a nonuniform prior. A common prior favors long windows and long restart lengths. With the statistics supplied, the interval cost becomes $1+\log(T/n)$, including the optimal full-horizon static rate. A change-of-measure lower bound identifies the noise power of this logarithm for learners retaining a full-horizon optimal guarantee, under explicit conditions. Static comparisons and deterministic partitions follow from the same decisions.
\end{abstract}
%\begin{IEEEkeywords}
%Online convex optimization, interval-dynamic regret, heavy-tailed noise, sleeping experts
%\end{IEEEkeywords}

\section{Introduction}\label{sec:intro}
An online learner may run continuously while its performance is evaluated on time windows chosen later. A full-horizon guarantee can obscure a short period of poor tracking: regret accumulated before that period may compensate for it. Interval-dynamic regret instead compares the learner with a moving decision sequence on each fixed evaluation window. This distinction is useful when comparator movement is concentrated in an episode or when several windows describe different operating conditions. With stochastic first-order feedback, an additional difficulty is that a single observation can be arbitrarily large and the noise can have infinite variance.

Let $D$ be the domain diameter, $G$ a bound on the selected mean subgradient, and $\sigma^p$ a uniform conditional bound on the central $p$th noise moment. For a fixed interval $I$ and comparator $u$, write
\begin{align}\label{eq:scales}
 n&=|I|, & \Lambda_I&=1+P_I/D,\notag\\
 \rho&=(p-1)/p, & L&=\log(2T).
\end{align}
To put static, dynamic, and interval guarantees in the same notation, define
\begin{equation}\label{eq:benchmark}
 \mathcal B_I(q)=GD\sqrt{nq}+\sigma Dn^{1/p}q^\rho\quad(q\ge1),
 \qquad C_I=GDn.
\end{equation}
Thus $\mathcal B_I(1)$ is the static scale and $\mathcal B_I(\Lambda_I)$ is the dynamic scale on a prescribed window. The exact-gradient dynamic rate is attained by Ader \citep{zhang2018}; the heavy-tailed static rate by AdaGrad \citep{liu2026}; and the joint full-horizon rate by \citet{aggarwal2026}. The question here is the additional cost of supporting unannounced evaluation intervals under unknown noise statistics.

\paragraph{Main result and its significance.}
Theorem~\ref{thm:main} gives
\begin{equation}\label{eq:intro-main}
 \E\Reg_I(u)\le\min\!\left\{C_I,\ C\left[\mathcal B_I(\Lambda_I)+
 GD\sqrt n\,L+\sigma Dn^{1/p}L^{2\rho}\right]\right\}.
\end{equation}
Equivalently, the upper bound is $O(\mathcal B_I(\Lambda_I+L^2))$, capped by $C_I$, with universal constants. Interval adaptation enters additively in comparator complexity. In particular, $\Lambda_I\ge L^2$ suffices to absorb it into the prescribed-window dynamic rate. At $p=3/2$, the noise adaptation factor is $L^{2/3}$ while the mean-gradient factor is $L$. A constant comparator gives the interval-static bound $O(\mathcal B_I(1+L^2))$ on the same decisions. Deterministic partitions give finer comparisons when movement is concentrated in time.

With $G,\sigma,p$ supplied, Theorem~\ref{thm:known} replaces $L^2$ by $\chi_T(n)=1+\log(T/n)$. This preserves the full-horizon rate because $\chi_T(T)=1$. Theorem~\ref{thm:testing} and Corollary~\ref{cor:statistical-price} show that the noise cost $\sigma Dn^{1/p}\log(T/n)^\rho$ is necessary in an explicit regime for learners retaining the optimal full-horizon static rate. Together, these results separate the statistical cost of local evaluation from the additional upper-bound cost of learning the noise statistics.

\paragraph{Technical novelty: expected calibration under finite moments.}
The master uses the bounded multiplier $F(z)=1+\clip(z,-1/2,1/2)$, building on multi-rate multiplicative aggregation \citep{gaillard2014}. Its positive tangent excess and logarithmic defect are both bounded by $\min\{2z^2,|z|\}$. Their conditional expectations are therefore controlled by $4\eta^2(DG)^2+4\eta^p(D\sigma)^p$. The calibration allocates aggregate positive drift into bounded responses. A fixed-threshold exponential test bounds the expected maximum penalty excess by a constant times its conditional-moment bound, including when that bound lies below the initial penalty floor. This control permits a regret calculation directly in expectation while retaining the finite-$p$ term before optimizing the learning rate. The raw subgradient continues to update every base trajectory unchanged.

\paragraph{Technical novelty: local extraction with the correct prior cost.}
An observation larger than $T^3b$ sets the next scale to $2\|g_t\|/T^2$, so successive scales increase by more than $2T$. Only a constant number of phases contribute to the localized comparison outside an explicitly controlled exceptional event. A pathwise counter budget bounds each phase's log-potential by $O(L^2)$, including its trigger observation. Keeping the comparison distribution inside logarithmic extraction yields $\KL(\nu\|\pi)$ in the master theorem. Relative-entropy comparisons are standard in expert aggregation \citep{luo2015}; here they are combined with predictable availability, finite-$p$ defects, and expected calibration. A two-level polynomial prior then separates the cost of locating a window from the costs of selecting its duration and restart length. This gives an explicit prior-sensitive bound as well as the simpler interval rate.

\paragraph{Contributions.}
\begin{enumerate}
\item \textbf{Interval tracking under unknown heavy-tailed noise.} Algorithm~\ref{alg:master} and Theorem~\ref{thm:main} establish \eqref{eq:intro-main} with one shared gradient and $O(\log T)$ projections per round. Corollaries~\ref{cor:static} and~\ref{cor:partition} give static and deterministic-partition comparisons. Table~\ref{tab:comparison} places the results alongside the closest full-horizon and interval guarantees.
\item \textbf{A prior-sensitive sleeping-master theorem.} Theorem~\ref{thm:master} compares to fixed distributions over predictable expert predictions and availability, with relative-entropy complexity and an almost-sure weighted activity budget. Lemmas~\ref{lem:potential} and~\ref{lem:overshoot} prove the calibration bounds; Appendix~\ref{app:master} supplies the complete local comparison. Proposition~\ref{prop:prior} quantifies the nonuniform prior used by both algorithms.
\item \textbf{Upper and lower bounds for the statistical interval cost.} Theorem~\ref{thm:known} gives the supplied-statistics guarantee with cost $1+\log(T/n)$ and an explicit refinement retaining the prior. Theorem~\ref{thm:testing} and Corollary~\ref{cor:statistical-price} establish the corresponding logarithmic noise obstruction under a full-horizon performance requirement.
\end{enumerate}

\section{Related Work}\label{sec:related}
\paragraph{Static and dynamic regret.}
The moving-comparator projection analysis originates with \citet{zinkevich2003}, and AdaGrad adapts stepsizes to observed gradient energy \citep{duchi2011}. Ader achieves the optimal exact-gradient path-length dependence through adaptive aggregation and common linear surrogates \citep{zhang2018}. Under heavy-tailed feedback, \citet{liu2026} obtains the optimal static expected rate with classical algorithms, including parameter-free AdaGrad. \citet{aggarwal2026} establishes the joint full-horizon dynamic rate, its parameter adaptation, and the matching polynomial benchmark. We use its restarted-expert inequality at the point where it applies, and give the additional window-alignment calculation in Appendix~\ref{app:prelim}. The present local master uses conditional noise moments; the full-horizon energy argument needs only marginal moments.

\paragraph{Interval and strongly adaptive regret.}
Adaptive regret and strongly adaptive learning provide comparisons on all subintervals \citep{hazan2009,daniely2015}. Geometric covers and sleeping aggregation reduce these objectives to local expert comparisons \citep{jun2017}. Dynamic and interval objectives have already been combined by \citet{cutkosky2020} and \citet{zhang2020adaptive}. In particular, the latter's AOA algorithm gives the exact-gradient specialization $O(\mathcal B_I(\Lambda_I+L))$ in Table~\ref{tab:comparison}. Appendix~\ref{app:exact-comparison} gives the linear-surrogate normalization used to express that result with the same $G,D,P_I$ notation. Our results extend local comparisons to unknown conditional finite-$p$ noise, separating the mean-gradient and noise powers and identifying a statistical interval cost when the parameters are supplied.

\citet{xie2026} give gradient-variation and small-loss interval guarantees, including dynamic comparators and adaptation to unknown Lipschitz and smoothness constants. Its problem-dependent analysis assumes smooth nonnegative losses, uses exact first-order information and function values in its scheduling, and retains maximum observed input scales in its adaptive terms. These results refine local environmental variation. Our guarantee instead quantifies comparator movement and central noise moments under gradient-only feedback, including infinite-variance noise.

\paragraph{Heavy tails and adaptive aggregation.}
Finite-moment stochastic convex optimization is studied by \citet{vural2022}, while \citet{zhangcutkosky2022} treat comparator-norm adaptation in high probability. \citet{moulin2025} give optimal adaptive full-information expert aggregation under finite second moments and show that maximum-loss terms can dominate a nominally lower-order contribution. Its full-horizon expert guarantee and our finite-$p$ sleeping comparisons have different feedback and localization requirements. This distinction is relevant even at $p=2$.

Multi-rate excess-loss bounds, relative-entropy comparisons, and sleeping-expert reductions are established tools \citep{gaillard2014,luo2015}. Adaptive mixture and scale analyses appear in \citet{derooij2014,orabona2018}. Polynomial priors are used in Ader \citep{zhang2018}; we apply them to distances from the longest window and restart scales. The complete normalization and selected-cost calculation appear in Proposition~\ref{prop:prior}. The calibration test uses nonnegative-supermartingale reasoning \citep{howard2020}; its expected overshoot and its combination with a constant number of relevant scale phases are proved here. The lower bound uses a ternary oracle and a null-versus-window change of measure, with the full calculation in Section~\ref{sec:price}.

\begin{table}[t]
\centering\small
\caption{Key results and relation to closely related work. All rows use $\mathcal B_I(q)$ from \eqref{eq:benchmark}; its argument is the total comparator and adaptation complexity. Write $L=\log(2T)$ and $\chi_T(n)=1+\log(T/n)$. Upper bounds may be capped by $GD|I|$.}
\label{tab:comparison}
\renewcommand{\arraystretch}{1.12}
\begin{tabular}{@{}>{\raggedright\arraybackslash}p{.245\textwidth}>{\raggedright\arraybackslash}p{.305\textwidth}>{\raggedright\arraybackslash}p{.12\textwidth}>{\raggedright\arraybackslash}p{.235\textwidth}@{}}
\toprule
Result & Evaluation and feedback & Statistics supplied & Expected regret\\
\midrule
Ader \citep{zhang2018} & Full horizon, moving; exact gradients & $G$ & $O(\mathcal B_{[T]}(\Lambda_{[T]}))\big|_{\sigma=0}$\\[3pt]
AdaGrad \citep{liu2026} & Full horizon, static; finite-$p$ noise & None & $O(\mathcal B_{[T]}(1))$\\[3pt]
\citet{aggarwal2026} & Full horizon, moving; marginal moments & None & $O_p(\mathcal B_{[T]}(\Lambda_{[T]}))$\\[3pt]
AOA \citep{zhang2020adaptive} & Every interval, moving; exact gradients$^{\dagger}$ & $G$ & $O(\mathcal B_I(\Lambda_I+L))\big|_{\sigma=0}$\\[3pt]
Corollary~\ref{cor:static} & Every fixed interval, static; conditional moments & None & $O(\mathcal B_I(1+L^2))$\\[3pt]
Theorem~\ref{thm:main} & Every fixed interval, moving; conditional moments & None & $O(\mathcal B_I(\Lambda_I+L^2))$\\[3pt]
Theorem~\ref{thm:known} & Every fixed interval, moving; conditional moments & $G,\sigma,p$ & $O(\mathcal B_I(\Lambda_I+\chi_T(n)))$\\
\bottomrule
\end{tabular}
\par\smallskip\raggedright\footnotesize
The domain, its diameter, and the horizon are available throughout. ``None'' means none of $G,\sigma,p$; path length is also unsupplied. The $O_p$ row permits a coefficient depending only on $p$; its noise coefficient is $O(1+\log(p/(p-1)))$. Constants in our upper bounds and the static AdaGrad row are universal over $p$. $^{\dagger}$The AOA row is its normalized linear-surrogate specialization, proved in Appendix~\ref{app:exact-comparison}. The supplied-statistics row with $\Lambda_I=1$ gives the static specialization. Corollary~\ref{cor:statistical-price} establishes its logarithmic noise lower bound under the explicit full-horizon requirement and parameter regime.
\end{table}

\section{Setup and Information Model}\label{sec:setup}
Let $\X\subset\R^d$ be nonempty, closed, and convex, with Euclidean diameter $D\in(0,\infty)$, and let $T\ge2$. Euclidean projection onto $\X$ is available. On round $t$, the learner chooses a predictable $x_t\in\X$ and receives only one stochastic subgradient $g_t$ at that point. Let $\F_0$ contain its random seed and $\F_t=\sigma(\F_0,g_1,\ldots,g_t)$. The convex loss $\ell_t$ and a measurable selected subgradient map may be fixed in advance or chosen measurably from $\F_{t-1}$. Each loss admits a measurable selected subgradient map whose norm is at most $G$ on $\X$. Write $v_t\in\partial\ell_t(x_t)$ for its value at the played point and $\epsilon_t=g_t-v_t$. For deterministic but unknown $G,\sigma\ge0$ and $p\in(1,2]$, assume almost surely
\begin{equation}\label{eq:model}
 \E[g_t\mid\F_{t-1}]=v_t,\qquad
 \|v_t\|\le G,\qquad
 \E[\|\epsilon_t\|^p\mid\F_{t-1}]\le\sigma^p.
\end{equation}
The selected subgradient bound makes each loss $G$-Lipschitz on $\X$. The conditional law may be asymmetric and depend on the past. Feedback consists of the gradient vector alone.

For a deterministic interval $I=[s,e]$ and deterministic comparator sequence $u_t\in\X$, define
\begin{align}\label{eq:regret}
 \Reg_I(u)&=\sum_{t=s}^e[\ell_t(x_t)-\ell_t(u_t)],\notag\\
 P_I&=\sum_{t=s+1}^e\|u_t-u_{t-1}\|,
 &\Lambda_I&=1+P_I/D.
\end{align}
Thus $1\le\Lambda_I\le |I|$. Conditional unbiasedness and convexity give
\begin{align}\label{eq:linearization}
 \E\Reg_I(u)&\le\E\sum_{t\in I}
 \langle g_t,x_t-u_t\rangle,\notag\\
 \Reg_I(u)&\le GD|I|\quad\text{almost surely}.
\end{align}
Regret differences are bounded in absolute value by $GD$, and the linearized terms are integrable because $\E\|g_t\|\le G+\sigma$. The comparisons hold separately for every fixed $I,u$ on the same decisions. The parameters $G,\sigma$ are uniform global bounds. The local guarantee depends on the evaluation interval through its length and comparator movement, rather than an additive realized energy accumulated before its start.

\paragraph{General sleeping comparisons.}
The master theorem is stated for a finite set of records $i\in\mathcal M$ with fixed prior masses $\pi_i>0$, $\sum_i\pi_i=1$. Record $i$ has a predictable availability $\alpha_{t,i}\in\{0,1\}$ and a predictable feasible prediction $y_{t,i}$ when awake. At least one record is awake on every round. For the comparison to $i$, a deterministic integer $n_i\in[1,T]$ satisfies
\begin{equation}\label{eq:awake-budget}
 \sum_{t=1}^T\alpha_{t,i}\le n_i\quad\text{almost surely}.
\end{equation}
The budget is used only in the proof. Availability can depend on observed feedback. Theorem~\ref{thm:master} also permits a fixed distribution over records with an almost-sure budget on its weighted activity. This pathwise budget controls its possible dependence on calibration.

\paragraph{Dyadic windows and base trajectories.}
Set $N=\lceil\log_2T\rceil$. For $0\le k\le N$, let $\DD_k$ consist of the labeled supports
$[a2^k+1,(a+1)2^k]\cap[T]$, $0\le a<\lceil T/2^k\rceil$, and put $\DD=\bigsqcup_k\DD_k$. Labels distinguish coinciding truncated supports. Every fixed interval is a disjoint union of at most two untruncated dyadic windows of each size, and for $1/2\le\gamma\le1$ its maximal dyadic cover obeys
\begin{equation}\label{eq:dyadic-power}
 \sum_{J\text{ in the cover}}|J|^\gamma\le7|I|^\gamma.
\end{equation}
Maintain $N+1$ shared-gradient AdaGrad trajectories. Trajectory $j$ restarts at the left endpoints of $\DD_j$, from a fixed $x_0\in\X$, and within a block starting at $s$ uses
\begin{equation}\label{eq:base-update}
 V_t^{(j)}=\sum_{r=s}^t\|g_r\|^2,\qquad
 x_{t+1}^{(j)}=\Pi_\X\!\left(x_t^{(j)}-\frac{Dg_t}{\sqrt{2V_t^{(j)}}}\right).
\end{equation}
A zero energy leaves the point unchanged. Every trajectory receives the same linear surrogate $x\mapsto\langle g_t,x\rangle$; the oracle is queried only at $x_t$.

For the interval algorithm, use records $i=(j,J)$ with $J\in\DD_k$ and $0\le j\le k$. Set $m_k=\lceil T/2^k\rceil$ and
\begin{align}
 \alpha_{t,(j,J)}&=\ind\{t\in J\},\qquad y_{t,(j,J)}=x_t^{(j)},\notag\\
 \tau_k&=\frac{N+2}{N+1}
 \frac1{(N-k+1)(N-k+2)},\notag\\
 \zeta_{j\mid k}&=\frac{k+2}{k+1}
 \frac1{(k-j+1)(k-j+2)},\notag\\
 \pi_{(j,J)}&=\frac{\tau_k\zeta_{j\mid k}}{m_k}.\label{eq:pool}
\end{align}
The prior favors long windows and, within a window scale, long restart lengths. Locations are uniform within each scale. Its three factors keep these selection costs separate. Both the parameter-free and supplied-statistics algorithms use this same prior.

\begin{proposition}[Normalization and selected prior cost]\label{prop:prior}
The masses in \eqref{eq:pool} sum to one, and $\pi_{(N,[T])}\ge1/4$. For an untruncated $J\in\DD_k$ of length $n_J=2^k$ and $j^*=\lfloor\log_2(n_J/\Lambda_J)\rfloor$,
\begin{equation}\label{eq:prior-cost}
 1+\log(1/\pi_{(j^*,J)})\le c_T(n_J,\Lambda_J),
\end{equation}
where, for $1\le n\le T$ and $\lambda\ge1$,
\begin{align}\label{eq:prior-refinement}
 c_T(n,\lambda)={}&1+\log2+\log(T/n)\notag\\
 &+2\log\bigl(3+\log_2(T/n)\bigr)\notag\\
 &+2\log\bigl(3+\log_2\lambda\bigr).
\end{align}
All records satisfy $\log(1/\pi_i)=O(L)$.
\end{proposition}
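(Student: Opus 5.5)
The plan is to verify normalization by summing the prior in stages, first over locations, then over restart lengths, then over window scales. Each stage telescopes.

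\emph{Locations.} For fixed $(k,j)$, the $m_k$ labeled windows in $\DD_k$ each carry mass $\tau_k\zeta_{j\mid k}/m_k$. Summing over locations therefore gives $\tau_k\zeta_{j\mid k}$.

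\emph{Restart lengths.} Substitute $r=k-j\in\{0,\ldots,k\}$ and use $\frac1{(r+1)(r+2)}=\frac1{r+1}-\frac1{r+2}$. Then
\[
\sum_{r=0}^k\frac1{(r+1)(r+2)}=1-\frac1{k+2}=\frac{k+1}{k+2},
\]
so $\sum_{j\le k}\zeta_{j\mid k}=1$.

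\emph{Window scales.} In the same way, with $r=N-k\in\{0,\ldots,N\}$, the sum is $\frac{N+1}{N+2}$, so $\sum_k\tau_k=1$.

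\emph{The full-horizon record.} Here $k=N$, $m_N=1$ (since $2^N\ge T$) and $j=N$. This gives
\[
\pi_{(N,[T])}=\frac{N+2}{N+1}\cdot\frac12\cdot\frac{N+2}{N+1}\cdot\frac12\ge\frac14 .
\]

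For the selected-cost bound \eqref{eq:prior-cost}, take $J\in\DD_k$ untruncated with $2^k=n_J$, so $\Lambda_J\le n_J$. Then $j^*=\lfloor\log_2(n_J/\Lambda_J)\rfloor$ lies in $[0,k]$, and $k-j^*\le\lceil\log_2\Lambda_J\rceil\le1+\log_2\Lambda_J$. I would bound the three factors of $\log(1/\pi)$ separately.

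\emph{Location.} Since $m_k=\lceil T/2^k\rceil\le 2T/n_J$, we get $\log m_k\le\log2+\log(T/n_J)$.

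\emph{Scale.} Drop the prefactor, which is at least $1$. Using $N-k\le\log_2T+1-\log_2 n_J=1+\log_2(T/n_J)$, we get $(N-k+1)(N-k+2)\le(3+\log_2(T/n_J))^2$. Hence $\log(1/\tau_k)\le2\log(3+\log_2(T/n_J))$.

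\emph{Restart length.} Similarly, $(k-j^*+1)(k-j^*+2)\le(3+\log_2\Lambda_J)^2$. This gives $\log(1/\zeta_{j^*\mid k})\le2\log(3+\log_2\Lambda_J)$.

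Adding $1$ to the sum of these three bounds yields exactly $c_T(n_J,\Lambda_J)$ as written in \eqref{eq:prior-refinement}.

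For the last claim, note $m_k\le T$ and $N-k+2,\,k-j+2\le N+2=O(L)$. So every record has $\log(1/\pi_i)\le\log T+4\log(N+2)=O(L)$. Truncated windows are covered by this claim and need no separate bound.

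The only delicate point is the integer rounding: confirming $j^*\ge0$ (this uses $\Lambda_J\le n_J$), $m_k\le2T/n_J$ (this uses $n_J\le T$), and the $+1$ slack from $\lceil\log_2T\rceil$. The constant $3$ in \eqref{eq:prior-refinement} is designed to absorb these, so I expect no real obstacle.
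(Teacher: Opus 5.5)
Your proposal is correct and follows essentially the same route as the paper. The telescoping identity $\sum_{r=0}^m[(r+1)(r+2)]^{-1}=(m+1)/(m+2)$ normalizes $\tau$ and each $\zeta_{\cdot\mid k}$. The selected cost is then bounded factor by factor using $m_k\le 2T/n_J$, $N-k\le 1+\log_2(T/n_J)$, and $k-j^*=\lceil\log_2\Lambda_J\rceil$; your explicit checks that $j^*\ge0$ (via $\Lambda_J\le n_J$) and $m_k\le 2T/n_J$ (via $n_J\le T$) fill in details the paper leaves implicit.
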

\begin{proof}
For every integer $m\ge0$, $\sum_{r=0}^m[(r+1)(r+2)]^{-1}=(m+1)/(m+2)$. This normalizes $\tau$ and each $\zeta_{\cdot\mid k}$. There are $m_k$ windows at scale $k$, proving normalization of $\pi$. The top-scale and top-restart masses are each at least $1/2$, while $m_N=1$, proving the root bound.

For an untruncated window, put $r=N-k<1+\log_2(T/n_J)$ and $d=k-j^*=\lceil\log_2\Lambda_J\rceil\le1+\log_2\Lambda_J$. Since $m_k\le2T/n_J$ and the normalizing factors in $\tau,\zeta$ exceed one,
\begin{multline*}
 1+\log(1/\pi_{(j^*,J)})
 \le1+\log(2T/n_J)\\
 {}+2\log(r+2)+2\log(d+2),
\end{multline*}
which proves \eqref{eq:prior-cost}. For every labeled record, including truncated supports, $m_k\le T$, $r,d\le N$, so $\log(1/\pi_i)\le\log T+4\log(N+2)=O(L)$.
\end{proof}
The leading location cost is $\log(T/n)$; scale and restart selection add iterated logarithms. To see why location has a different role, consider $K$ disjoint candidate windows and sum all record masses assigned to each. At least one window receives mass at most $1/K$, giving a prior cost at least $\log K$ for that window. This elementary allocation fact explains the location term. The statistical lower bound in Section~\ref{sec:price} separately establishes its effect on regret.

\begin{proposition}[Aligned base comparison]\label{prop:base}
For a fixed $J\in\DD$, choose $j^*=\lfloor\log_2(|J|/\Lambda_J)\rfloor$. Then
\begin{equation}\label{eq:base-bound}
 \E\sum_{t\in J}\langle g_t,x_t^{(j^*)}-u_t\rangle
 \le3\left[GD\sqrt{|J|\Lambda_J}+\sigma D|J|^{1/p}\Lambda_J^\rho\right].
\end{equation}
\end{proposition}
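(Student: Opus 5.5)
Trajectory $j^*$ restarts every $2^{j^*}$ rounds at the left endpoints of $\DD_{j^*}$. For an untruncated $J\in\DD_k$ with $j^*\le k$, these restart points partition $J$ into $M=|J|/2^{j^*}$ aligned blocks $B_1,\dots,B_M$, each of length $m=2^{j^*}$. Since $j^*=\lfloor\log_2(|J|/\Lambda_J)\rfloor$, we have $|J|/(2\Lambda_J)<m\le|J|/\Lambda_J$, hence $\Lambda_J\le M<2\Lambda_J$. (If $J$ is truncated, its length is at most $2^k$ and the block count is bounded the same way, up to the final partial block.)

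The plan is to bound the regret on each block separately and sum. On a block $B$, the comparator moves by $P_B$; summing gives $\sum_b P_{B_b}\le P_J$. For projected AdaGrad restarted at the start of $B$ with stepsize $\eta_t=D/\sqrt{2V_t}$, the standard dynamic analysis (Zinkevich's moving-comparator argument with adaptive steps) gives the pathwise bound
\[
\sum_{t\in B}\langle g_t,x_t^{(j^*)}-u_t\rangle\le c\,(D+P_B)\sqrt{V_B},\qquad V_B=\sum_{t\in B}\|g_t\|^2 .
\]
The factor $(D+P_B)$ comes from the term $\sum_t\|x_t-u_t\|\cdot\|u_{t+1}-u_t\|/\eta_t\le 2DP_B/\eta_{\rm end}$ together with the usual $D^2/\eta$ and $\sum\eta_t\|g_t\|^2$ terms. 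I will take this restarted-expert inequality from \citet{aggarwal2026}, as the related-work section indicates.

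Next, I take expectations using the heavy-tailed energy bound. Since $\sqrt{V_B}\le\sqrt{\sum\|v_t\|^2}+\sqrt{\sum\|\epsilon_t\|^2}$, and $\sqrt{\sum a_t^2}\le(\sum a_t^p)^{1/p}$ for $p\le2$, Jensen gives
\[
\E\sqrt{V_B}\le G\sqrt m+\Bigl(\E\sum\|\epsilon_t\|^p\Bigr)^{1/p}\le G\sqrt m+\sigma m^{1/p}.
\]
Here I bound the $P_B$ part crudely by $P_B\le P_J$ applied to $\max_b\E\sqrt{V_{B_b}}$. This avoids any correlation issue because $P_B$ is deterministic. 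Summing over blocks,
\[
\E\sum_{t\in J}\le c\,(MD+P_J)\bigl(G\sqrt m+\sigma m^{1/p}\bigr)\le c\,D\,(2\Lambda_J+\Lambda_J)\bigl(G\sqrt m+\sigma m^{1/p}\bigr).
\]
Substituting $m\le|J|/\Lambda_J$ then gives
\[
\Lambda_J\sqrt m\le\sqrt{|J|\Lambda_J},\qquad \Lambda_J m^{1/p}\le|J|^{1/p}\Lambda_J^{1-1/p}=|J|^{1/p}\Lambda_J^\rho,
\]
which is the claimed form.

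The main obstacle is the constant $3$. The crude count above gives roughly $3c$. To reach exactly $3$, I would use the sharp AdaGrad constant with stepsize $D/\sqrt{2V}$: this gives $\sqrt2 D\sqrt V$ for the static part plus $D\cdot P_B\cdot\sqrt{2V}/D$-type terms for the movement part. I would also track $MD+cP_J\le D(M+\Lambda_J-1)\le D(3\Lambda_J-1)$ carefully. A further point needing care is the window-alignment edge case $j^*>k$, which cannot occur since $\Lambda_J\ge1$, and negative $j^*$, excluded since $\Lambda_J\le|J|$; both are handled before the summation.
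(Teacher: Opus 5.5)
Your argument follows the paper's proof, and everything is fine up to the bound $\sqrt2\,(G\sqrt m+\sigma m^{1/p})(MD+P_J)$. The steps that match are these: the blocks of trajectory $j^*$ tile $J$ because $2^{j^*}$ divides its left endpoint. You apply the cited pathwise inequality $\sqrt2(D+P_B)\sqrt{V_B}$ on each deterministic block. You convert $\E\sqrt{V_B}\le G\sqrt m+\sigma m^{1/p}$ and use $\sum_B P_B\le P_J$.

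The gap is in the last step, and it means you do not actually prove the stated constant $3$. You bound $M<2\Lambda_J$ and $m\le|J|/\Lambda_J$ separately. These two estimates are extremal at opposite ends of the admissible range of $m$, so combining them gives $3\sqrt2$ in place of $3$. Your tracked form $D(3\Lambda_J-1)$ still leaves the factor $\sqrt2(3-1/\Lambda_J)$, which exceeds $3$ once $\Lambda_J>\sqrt2/(3\sqrt2-3)\approx1.14$. Your proposed repair does not close this. The constant $\sqrt2$ is exactly what the cited inequality gives for stepsize $D/\sqrt{2V}$, and the loss comes from the decoupling, not from AdaGrad.

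The missing step is to keep $M$ and $m$ linked. Write $n=|J|$, $\Lambda=\Lambda_J$, and $r=m\Lambda/n\in(1/2,1]$. Use $MD+P_J\le D(n/m+\Lambda)$. For an untruncated $J$ this holds because $M=n/m$ exactly. For a truncated $J$, $M\le n/m+1$, and the extra partial block's $D$ is absorbed by the $-1$ in $P_J=D(\Lambda-1)$. Then
\[
\sqrt2\,\sqrt m\,(n/m+\Lambda)=\sqrt2\,(r^{1/2}+r^{-1/2})\sqrt{n\Lambda},\qquad
\sqrt2\,m^{1/p}(n/m+\Lambda)=\sqrt2\,(1+r^{-1})\,r^{1/p}\,n^{1/p}\Lambda^{\rho}.
\]
Since $r^{1/p}\le r^{1/2}$ for $r\le1$, and $r^{1/2}+r^{-1/2}\le3/\sqrt2$ on $[1/2,1]$, both coefficients are at most $3$. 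This is how the paper obtains the constant; with this coupling your argument is complete.
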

This applies the restarted-AdaGrad inequality of \citet[Lemma 4.3]{aggarwal2026} to an aligned window. Appendix~\ref{app:prelim} gives the complete alignment and moment calculation, including the constant three, and proves \eqref{eq:dyadic-power}.

\section{Algorithm and Interval-Dynamic Guarantees}\label{sec:algorithm}
\subsection{The sleeping master}
The master below applies to the general records in Section~\ref{sec:setup}; \eqref{eq:pool} gives the interval instantiation. Set
\begin{align}\label{eq:constants}
 Q&=\lceil8\log_2(2T)\rceil,
 &\delta&=T^{-6}, &\kappa&=8,\notag\\
 q_{i,l}&=\pi_i/(Q+1),
 &&0\le l\le Q.
\end{align}
A positive observation scale $b$ defines rates $\eta_l=2^{-l}/(8Db)$. Pair-rate wealths $W(i,l)$ start at one, and the conceptual potential is $\Phi=\sum_{i,l}q_{i,l}W(i,l)$. Rate $l$ has a penalty guess $a_l=\delta$ and counter $s_l=0$. Sleeping wealths remain unchanged; in the interval implementation unborn wealths are one and expired contributions are frozen.

Write $\Phi^-$ for the pre-round potential. For awake records set
\begin{align}\label{eq:prediction}
 u_{t,i,l}&=\frac{q_{i,l}W(i,l)}{\Phi^-(1+2a_l)},\notag\\
 A_{t,l}&=\sum_{i\text{ awake}}u_{t,i,l},\notag\\
 x_t&=\frac{\sum_{i,l\text{ awake}}u_{t,i,l}\eta_l y_{t,i}}
 {\sum_{i,l\text{ awake}}u_{t,i,l}\eta_l}.
\end{align}
After observing $g_t$, put $r_{t,i}=\langle g_t,x_t-y_{t,i}\rangle$, $z_{t,i,l}=\eta_l r_{t,i}$, and update awake wealths by
\begin{align}\label{eq:wealth}
 W^+(i,l)&=W(i,l)\frac{F(z_{t,i,l})}{1+2a_l},\notag\\
 F(z)&=1+\clip(z,-1/2,1/2).
\end{align}
Let $\curv(z)=(F(z)-1-z)_+$ and form, using pre-round guesses and exposures,
\begin{align}
 d_{t,l}&=\sum_{i\text{ awake}}u_{t,i,l}\curv(z_{t,i,l}),
 &d_t&=\sum_l d_{t,l},\notag\\
 \Gamma_t&=\sum_l2a_lA_{t,l},
 &C_t&=\Phi^+/\Phi^- -1+\Gamma_t,\label{eq:allocation}\\
 Z_{t,l}&=(C_t)_+d_{t,l}/d_t\quad(d_t>0),
 &Z_{t,l}&=0\quad(d_t=0).\notag
\end{align}
Update $s_l\leftarrow s_l+Z_{t,l}-2a_lA_{t,l}$. If $s_l>\kappa$, double $a_l$ and reset that counter to zero. Wealths retain their values during a counter reset.

The scale changes only after completing the round's wealth and counter updates:
\begin{align}\label{eq:trigger}
 \|g_t\|>T^3b\quad\Longrightarrow\quad
 b&\leftarrow2\|g_t\|/T^2,\notag\\
 W&\leftarrow1,\quad\Phi\leftarrow1,\quad
 a_l\leftarrow\delta,\quad s_l\leftarrow0.
\end{align}
Initially $b=0$. Until the first nonzero observation, play the prior mixture of awake predictions and skip wealth and counter updates; that observation initializes $b$ through \eqref{eq:trigger}. A phase has constant positive scale and includes its final trigger round.

\begin{algorithm}[t]
\caption{Parameter-free sleeping master with expected calibration}\label{alg:master}
\KwInput{$T\ge2$, $\X$, $D$, positive prior $\pi$, predictable awake predictions}
Set \eqref{eq:constants}, $b=0$, $W=1$, $\Phi=1$, $a_l=\delta$, $s_l=0$\;
\For{$t=1,\ldots,T$}{
 Obtain the awake predictions; in \eqref{eq:pool}, restart the scheduled AdaGrad trajectories\;
 \eIf{$b=0$}{
 Play the prior mixture of awake predictions; observe $g_t$\;
 }{
 Set $\eta_l=2^{-l}/(8Db)$ and compute \eqref{eq:prediction} using pre-round quantities\;
 Play $x_t$, observe $g_t$, and form all awake excesses $r_{t,i}$\;
 Compute the updated wealths and potential from \eqref{eq:wealth}\;
 Form $Z_{t,l}$ and every $Y_{t,l}=Z_{t,l}-2a_lA_{t,l}$ using \eqref{eq:allocation} and old guesses\;
 Store the updated wealths and potential\;
 \For{$l=0,\ldots,Q$}{
 Set $s_l\leftarrow s_l+Y_{t,l}$\;
 \If{$s_l>8$}{Set $a_l\leftarrow2a_l$ and $s_l\leftarrow0$\;}
 }
 }
 In the interval instantiation, update every base trajectory with the raw $g_t$ using \eqref{eq:base-update}\;
 \If{$\|g_t\|>T^3b$}{Apply the scale change and master reset in \eqref{eq:trigger}\;}
}
\end{algorithm}

Only the scalar wealth score is truncated. Every AdaGrad trajectory receives the raw gradient. The first-order comparison remains untruncated, and the logarithmic defect below accounts for the score modification. Scale changes leave the base trajectories and their deterministic restarts untouched. Positive rescaling of all observed gradients leaves the decisions unchanged: $b$ rescales, the products $\eta_l r_{t,i}$ do not, and the normalized base updates are invariant.

In \eqref{eq:pool}, there are at most $(N+1)(N+2)/2$ awake pairs. The implementation uses one oracle query, $O(\log T)$ projections, and $O(d\log T+\log^3T)$ arithmetic and memory per round, excluding projection cost. Unborn mass, frozen expired contributions, and phase stamps implement the conceptual pool without scanning it at a reset. General reawakening specialists require storage of their accumulated wealths, rather than the interval-specific active-window bound. These counts concern exact-real arithmetic.

\subsection{Calibration inequalities and the local theorem}\label{sec:calibration}
Set $H=DG$, $S=D\sigma$, and $\defect(z)=(z-\log F(z))_+$. The following two elementary score bounds are the basis of the moment analysis:
\begin{align}\label{eq:defect-bound}
 0&\le q(z)\le\min\{2z^2,|z|\},\notag\\
 q(a+z)&\le4a^2+4|z|^p,
 \qquad q\in\{\curv,\defect\}.
\end{align}
Consequently, every predictable rate $\eta>0$ and awake prediction satisfy
\begin{equation}\label{eq:h}
 \E[q(\eta r_{t,i})\mid\F_{t-1}]
 \le h(\eta):=4\eta^2H^2+4\eta^pS^p.
\end{equation}
Appendix~\ref{app:score} proves \eqref{eq:defect-bound} and the allocation identities used next. Under asymmetric noise, the positive tangent excess can create potential drift. Calibration accounts for this drift through \eqref{eq:h}.

\begin{lemma}[Bounded allocation and a pathwise potential budget]\label{lem:potential}
On each positive-scale round,
\begin{align}\label{eq:response}
 0&\le Z_{t,l}\le2,\notag\\
 \E[Z_{t,l}\mid\F_{t-1}]&\le A_{t,l}h(\eta_l),\notag\\
 \log(\Phi^+/\Phi^-)&\le
 \sum_l(Z_{t,l}-2a_lA_{t,l}).
\end{align}
Put $B_T=\lceil9\log_2(2T)\rceil$. On every phase, without a probabilistic event, every guess doubles at most $B_T$ times and
\begin{equation}\label{eq:potential-budget}
 \log\Phi\le(Q+1)(B_T+1)(\kappa+2)=O(L^2).
\end{equation}
The bound includes the potential immediately after an unbounded trigger observation and before its reset.
\end{lemma}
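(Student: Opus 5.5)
The plan has two parts. First, I will derive the three per-round relations in \eqref{eq:response} from an exact identity for the potential ratio. Then I will sum the per-rate counter increments to get the phase budget \eqref{eq:potential-budget}.

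\textbf{Exact identity.} By the definition of $u_{t,i,l}$ in \eqref{eq:prediction}, an awake pair contributes $q_{i,l}W(i,l)=u_{t,i,l}\Phi^-(1+2a_l)$ before the round and $q_{i,l}W^+(i,l)=u_{t,i,l}\Phi^-F(z_{t,i,l})$ after it. Sleeping pairs cancel. Hence
\[
\Phi^+/\Phi^--1=\sum_{i,l\text{ awake}}u_{t,i,l}\bigl(F(z_{t,i,l})-1\bigr)-\Gamma_t,
\qquad
C_t=\sum_{i,l\text{ awake}}u_{t,i,l}\bigl(F(z_{t,i,l})-1\bigr).
\]
The weighted-mean choice of $x_t$ gives $\sum_{i,l}u_{t,i,l}\eta_l r_{t,i}=0$, that is, $\sum u_{t,i,l}z_{t,i,l}=0$. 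Since $F(z)-1\le z+\curv(z)$, we get $C_t\le d_t$.

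\textbf{Per-round relations.} Three consequences follow.
\begin{enumerate}
\item Boundedness. Because $d_{t,l}/d_t\le1$, we have $Z_{t,l}\le (C_t)_+$. Also $F-1\le1/2$ and $\sum_{i,l}u_{t,i,l}\le\sum q W/\Phi^-\le1$, so $Z_{t,l}\le1/2\le2$.
\item Expectation. From $(C_t)_+\le d_t$ we get $Z_{t,l}\le d_{t,l}=\sum_i u_{t,i,l}\curv(\eta_l r_{t,i})$. The exposures and $\eta_l$ are $\F_{t-1}$-measurable, so \eqref{eq:h} gives $\E[Z_{t,l}\mid\F_{t-1}]\le A_{t,l}h(\eta_l)$.
\item Log-potential. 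Since $\log(1+x)\le x$, we have $\log(\Phi^+/\Phi^-)\le C_t-\Gamma_t\le (C_t)_+-\Gamma_t$. When $d_t>0$, $(C_t)_+=\sum_lZ_{t,l}$. When $d_t=0$, $C_t\le0$ and every $Z_{t,l}=0$. In both cases the third line of \eqref{eq:response} follows.
\end{enumerate}
None of these steps uses the size of $g_t$, so they hold verbatim on a trigger round before its reset.

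\textbf{Phase budget.} Telescoping the third relation over a phase gives $\log\Phi\le\sum_l\sum_tY_{t,l}$, where $Y_{t,l}=Z_{t,l}-2a_lA_{t,l}$. For a fixed $l$, split the phase into counter epochs separated by doublings. Within an epoch the counter is at most $\kappa$ before its last increment, and each increment is at most $Z\le2$. So an epoch that ends in a doubling has $\sum Y\le\kappa+2$, and the final open epoch has $\sum Y\le\kappa$. If the number of doublings of $a_l$ is at most $B_T$, then rate $l$ contributes at most $(B_T+1)(\kappa+2)$. Summing over the $Q+1$ rates gives \eqref{eq:potential-budget}, deterministically and including the trigger observation.

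\textbf{Main obstacle: the doubling count.} I must show pathwise that $a_l$ never exceeds $2^{B_T}\delta\approx 2^{9\log_2(2T)}T^{-6}$, which is of order $T^3$. My first attempt is a contradiction argument. Once $a_l\ge1$, each awake round multiplies every rate-$l$ wealth by $F/(1+2a_l)\le1/2$. Rate $l$'s share of $\Phi^-$ therefore decays geometrically, and $A_{t,l}$ becomes at most $1/(1+2a_l)$ times that share. The counter must still climb by more than $\kappa$ between doublings. This requires $\sum_t Z_{t,l}>\kappa$ while $\sum_t 2a_lA_{t,l}$ stays bounded.

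To close this I would bound $Z_{t,l}\le d_{t,l}$ by $\sum_iu_{t,i,l}|z_{t,i,l}|$, which is of order $A_{t,l}$ times the clip scale. I would use the within-phase bound $\|g_t\|\le T^3b$ together with $\eta_l\le1/(8Db)$, so that $|z|\lesssim T^3$ on non-trigger rounds. Then $Z_{t,l}\le cT^3A_{t,l}$, and once $2a_l\ge cT^3$ every increment $Y_{t,l}$ is nonpositive, so no further doubling can occur. This caps the count at $\log_2(cT^3/\delta)\le B_T$.

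The trigger round is the only unbounded one, but it contributes a single increment of at most $2$ and hence at most one extra doubling, which the $+1$ slack absorbs. Checking that the constants fit inside $\lceil9\log_2(2T)\rceil$ is where I expect the delicate bookkeeping.
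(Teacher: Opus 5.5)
Your proposal is correct and follows essentially the same route as the paper. The exact ratio identity with the pathwise cancellation $\sum u_{t,i,l}z_{t,i,l}=0$ gives the three per-round relations. The bound $|z_{t,i,l}|\le T^3/8$ on nontrigger rounds makes every increment nonpositive once $a_l\ge T^3/16$, so every guess stays below $T^3/8$ and fewer than $9\log_2T\le B_T$ doublings occur from $\delta=T^{-6}$. Summing counter increments epoch by epoch then gives the budget.

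Two small remarks. The trigger round cannot produce the extra doubling you allow for: a segment started at a guess $\ge T^3/16$ has counter at most $0$ before that round, which then adds at most $2<\kappa$, so no accounting slack is needed there. Your direct bound $Z_{t,l}\le(C_t)_+\le\tfrac12$, obtained from $F-1\le\tfrac12$ and $\sum_{i,l}u_{t,i,l}\le1$, is slightly sharper than the paper's $\sum_lZ_{t,l}<2$.
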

\begin{proof}
The mixture cancels $\sum_{i,l}u_{t,i,l}z_{t,i,l}$ pathwise. Thus
$\Phi^+/\Phi^-=1+C_t-\Gamma_t$ and $(C_t)_+\le d_t$.
Because $F\le3/2$, the potential ratio is at most $3/2$, while $0\le\Gamma_t\le1$. Hence $\sum_lZ_{t,l}=(C_t)_+<2$ and $Z_{t,l}\le d_{t,l}$. Taking conditional expectations gives the response bound, and $\log x\le x-1$ gives the log increment.

On a nontrigger round, $|z_{t,i,l}|\le T^3/8$. Therefore
$Z_{t,l}\le d_{t,l}\le (T^3/8)A_{t,l}$, so the counter increment is nonpositive whenever $a_l\ge T^3/16$. A segment beginning with such a guess has counter at most zero until its possible trigger round. That final round adds at most two, which cannot exceed $\kappa=8$. It cannot produce another doubling. Since doubling below $T^3/16$ yields at most $T^3/8$, and $\delta<T^3/16$, every guess is at most $T^3/8$. Starting from $T^{-6}$ permits at most $9\log_2 T$ doublings.

A completed counter segment ends in $(\kappa,\kappa+2]$; an unfinished segment has counter at most $\kappa$, with negative balances retained. Summing all increments from the phase's initial potential one gives \eqref{eq:potential-budget}. No moment bound is used in this last argument.
\end{proof}

\begin{lemma}[Expected maximum calibration excess]\label{lem:overshoot}
Condition on the start of a positive-scale phase. Choose a rate index measurably at that start, and write $h=h(\eta_l)$. Let $a_{\max}$ be its largest guess during the phase, including the final update. Then
\begin{equation}\label{eq:overshoot}
 \E[(a_{\max}-\delta)_+\mid\text{phase-start information}]\le4h.
\end{equation}
This remains valid for a phase ended by an observation-dependent trigger and for $h<\delta$. If $h=0$, the guess remains $\delta$ almost surely.
\end{lemma}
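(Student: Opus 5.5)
The plan is a fixed-threshold exponential test applied to one counter segment at a time, followed by summing the resulting doubling probabilities over the geometric ladder of guesses $\delta2^m$. Fix the rate index $l$ and write $a=a_l$, $A_t=A_{t,l}$, $Z_t=Z_{t,l}$ and $Y_t=Z_t-2aA_t$. Lemma~\ref{lem:potential} gives the two inputs I need on every positive-scale round of the phase, including the trigger round, since the wealth and counter updates precede the reset in \eqref{eq:trigger}:
\[
0\le Z_t\le2,\qquad \E[Z_t\mid\F_{t-1}]\le A_th .
\]
If $h=0$, then $Z_t$ is nonnegative with zero conditional mean, so $Z_t=0$ almost surely. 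The counter then never increases and the guess stays at $\delta$, which gives the last claim.

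\textbf{Step 1 (one segment).} Consider a segment that starts with counter $0$ and guess $a>h>0$, and put $\lambda=\max\{1/4,\tfrac12\log(a/h)\}$. Convexity of $z\mapsto e^{\lambda z}$ on $[0,2]$ gives
\[
\E[e^{\lambda Z_t}\mid\F_{t-1}]\le1+\frac{e^{2\lambda}-1}{2}A_th .
\]
I will check that $\tfrac{e^{2\lambda}-1}{2}h\le2\lambda a$ for this choice of $\lambda$. When $\lambda=\tfrac12\log(a/h)$, the left side is at most $a/2\le2\lambda a$ once $\lambda\ge1/4$. In the remaining range $a\le e^{1/2}h$, I will verify the inequality directly with $\lambda=1/4$, possibly after adjusting the constant. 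Then $1+x\le e^x$ makes $\exp(\lambda s)$ a nonnegative supermartingale along the segment. Ville's inequality, applied up to the stopping time given by a doubling, the phase end, or the trigger, bounds the conditional probability that the counter exceeds $\kappa=8$ by
\[
e^{-8\lambda}\le\min\{e^{-2},(h/a)^4\}.
\]
Because the bound is optional-stopping based, an observation-dependent trigger causes no difficulty.

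\textbf{Step 2 (chaining doublings).} Reaching level $\delta2^{m+1}$ requires a doubling at every level $\delta2^{m'}$ with $m'\le m$. By the strong Markov property at the successive reset times, the conditional probability of reaching $\delta2^{m+1}$ is at most the product of the per-level bounds from Step 1 over the levels exceeding $h$. I then bound the excess by
\[
\E(a_{\max}-\delta)_+\le\sum_{m\ge0}\delta2^m\,\Pr(\text{reach }\delta2^{m+1}).
\]
I split this sum according to whether $\delta2^m\le2h$ or $\delta2^m>2h$.
\begin{itemize}
\item Levels up to about $2h$ contribute at most a geometric sum of order $h$, and this part is empty when $h<\delta/2$.
\item Above that, the factors $(h/\delta2^m)^4$ decay like $2^{-4m}$ while the weights grow like $2^m$. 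Hence the tail is at most a constant times the first term of the tail, which is itself of order $h$.
\end{itemize}
When $h<\delta$, the very first doubling already has probability at most $(h/\delta)^4$. The whole sum is then at most a constant times $\delta(h/\delta)^4\le h$, so the bound holds below the floor $\delta$ as well.

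\textbf{Main obstacle.} The hard part is getting the explicit constant $4$. This requires tuning $\lambda$, possibly using the sharper bound $\min\{e^{-8\lambda},1\}$ together with an exact geometric summation, and handling the boundary level between $h$ and $2h$ where the bound $(h/a)^4$ is weak. A secondary care point is that the product bound in Step 2 is a statement about conditional probabilities at successive stopping times. I would justify it by nesting the Ville bounds through the tower property, and by noting that segments cut short by the trigger only lower the reaching probabilities.
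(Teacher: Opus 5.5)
Your proposal is correct and follows the paper's proof essentially step by step: a per-segment exponential supermartingale with $\lambda$ growing like $\tfrac12\log(a/h)$, Ville's inequality giving a crossing bound that carries the factor $(h/a)^4$, and chaining of doublings along the dyadic ladder, with the first factor $(h/\delta)^4$ retained when $h<\delta$. The paper takes $\lambda=(1+\log(a/h))/2$ to get $e^{-4}(h/a)^4$, but your choice $\max\{1/4,\tfrac12\log(a/h)\}$ already yields the constant you flagged as the obstacle: every level $a\ge h$ crosses with probability at most $e^{-2}<1/4$, so with the first dyadic level $a_*\in[h,2h]$ you get $\E a_{\max}\le a_*\bigl(1+e^{-2}/(1-2e^{-2})\bigr)<4h$, and for $h<\delta$ the excess is at most $\delta(h/\delta)^4/(1-2e^{-2})<4h$.
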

\begin{proof}
Consider a counter segment with fixed guess $a\ge h>0$, starting at zero. Put $r=a/h\ge1$ and $\lambda=(1+\log r)/2$. For $0\le Z\le2$, convexity gives
$e^{\lambda Z}\le1+(e^{2\lambda}-1)Z/2$. Predictability of $A$, together with \eqref{eq:response}, yields
\begin{multline*}
 \E[e^{\lambda(Z-2aA)}\mid\F_{t-1}]\\
 \le\exp\!\left\{Aa\left[\frac e2-1-\log r
 -\frac1{2r}\right]\right\}\le1.
\end{multline*}
The last bracket is negative at $r=1$ and has negative derivative for $r\ge1$. The exponential counter, stopped at the segment end, is a nonnegative supermartingale. Its conditional crossing probability is therefore at most
\begin{equation}\label{eq:crossing}
 e^{-\lambda\kappa}=q_0(h/a)^4,\qquad q_0=e^{-4}<1/4.
\end{equation}
This is the standard exponential maximal argument, with the parameter chosen to retain the dependence on $h/a$.

If $h\ge\delta$, the first dyadic guess $a_*\ge h$ lies in $[h,2h]$. Successive crossings at valid guesses have conditional probability at most $q_0$. Thus
$\Pr(a_{\max}\ge2^k a_*\mid\text{start})\le q_0^k$ and
\[
 \E a_{\max}\le a_*\left(1+\frac{q_0}{1-2q_0}\right)<4h.
\]
If $0<h<\delta$, retain the factor $(h/\delta)^4$ for the first crossing and use $q_0$ for subsequent ones. Summing the tail of the dyadic levels gives
\[
 \E(a_{\max}-\delta)\le
 \frac{\delta q_0}{1-2q_0}(h/\delta)^4\le h.
\]
All expectations here are conditional on the phase start. Segment starts and ends are stopping times, so the same conditional arguments apply successively. When $h=0$, nonnegativity and zero conditional expectation force $Z=0$, and no crossing occurs.
\end{proof}

\begin{theorem}[Prior-sensitive predictable sleeping aggregation]\label{thm:master}
Under \eqref{eq:model}, fix a deterministic distribution $\nu$ on $\mathcal M$. Define its predictable activity $w_t=\sum_i\nu_i\alpha_{t,i}$ and suppose that, for deterministic $1\le n_\nu\le T$,
\begin{equation}\label{eq:mixture-budget}
 \sum_{t=1}^T w_t\le n_\nu\quad\text{almost surely}.
\end{equation}
Algorithm~\ref{alg:master} satisfies
\begin{align}
 &\E\sum_t\sum_i\nu_i\alpha_{t,i}
 \langle v_t,x_t-y_{t,i}\rangle\notag\\
 &\quad\le\min\left\{Hn_\nu,\ C\left[
 H\sqrt{n_\nu\mathcal K_\nu}
 +S n_\nu^{1/p}\mathcal K_\nu^\rho\right]\right\},
 \label{eq:master-bound}\\
 \mathcal K_\nu&=1+\KL(\nu\|\pi)+L^2,\qquad
 \KL(\nu\|\pi)=\sum_{i:\nu_i>0}
 \nu_i\log\frac{\nu_i}{\pi_i}.\notag
\end{align}
The constant is universal. The learner uses none of $G,\sigma,p,\nu,n_\nu$. In particular, $\nu=e_i$ yields the individual-record bound with $n_\nu=n_i$ and $\mathcal K_i=1+\log(1/\pi_i)+L^2$.
\end{theorem}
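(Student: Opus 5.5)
The bound $Hn_\nu$ is immediate: each term satisfies $|\langle v_t,x_t-y_{t,i}\rangle|\le GD$, and summing with weights $\nu_i\alpha_{t,i}$ gives at most $H\sum_tw_t\le Hn_\nu$ by \eqref{eq:mixture-budget}. For the main bound I would work phase by phase. On each phase I would extract a per-rate inequality from the logarithmic potential, take expectations, and then pick the rate after the fact.

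\textbf{Step 1 (pathwise extraction on a phase).} Fix a phase and a rate $l$. Since $\Phi\ge\sum_iq_{i,l}W(i,l)$, concavity of $\log$ (the Donsker--Varadhan/Jensen step, applied to the distribution $\nu$ restricted to a single rate) gives
\[
\textstyle\sum_i\nu_i\log W(i,l)\le\log\Phi+\KL(\nu\|\pi)+\log(Q+1).
\]
Next I expand $\log W(i,l)=\sum_t\alpha_{t,i}[\log F(z_{t,i,l})-\log(1+2a_l)]$. Using $\log F(z)\ge z-\defect(z)$ and $\log(1+2a)\le2a$, and writing $a_{\max,l}$ for the largest guess on the phase, I get
\[
\eta_l\sum_t\sum_i\nu_i\alpha_{t,i}r_{t,i}\le \log\Phi+\KL+\log(Q+1)+\sum_t\sum_i\nu_i\alpha_{t,i}\defect(z_{t,i,l})+2a_{\max,l}\sum_tw_t .
\]
Lemma~\ref{lem:potential} bounds $\log\Phi$ by $O(L^2)$ pathwise, and this includes the trigger round. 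So the right side is at most $O(\mathcal K_\nu)$ plus the defect sum plus $2a_{\max,l}n_\nu$.

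\textbf{Step 2 (expectation).} Taking conditional expectations, $r_{t,i}$ has mean $\langle v_t,x_t-y_{t,i}\rangle$ because $\alpha$, $x_t$ and $y$ are predictable. By \eqref{eq:h} the defect sum is at most $h(\eta_l)\sum_tw_t\le n_\nu h(\eta_l)$ in expectation. By Lemma~\ref{lem:overshoot}, $\E a_{\max,l}\le\delta+4h(\eta_l)$, and $\delta n_\nu\le T^{-5}$ is negligible. Dividing by $\eta_l$ yields
\[
\E[\text{phase regret}]\lesssim \mathcal K_\nu/\eta_l+n_\nu(\eta_lH^2+\eta_l^{p-1}S^p).
\]
The optimal $\eta^*\asymp\min\{\sqrt{\mathcal K_\nu/(n_\nu H^2)},(\mathcal K_\nu/(n_\nu S^p))^{1/p}\}$ gives $H\sqrt{n_\nu\mathcal K_\nu}+Sn_\nu^{1/p}\mathcal K_\nu^\rho$. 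I also need $\eta^*$ to lie on the grid $\eta_l=2^{-l}/(8Db)$, $l\le Q$. If $\eta^*>\eta_0$, then $1/\eta_0=8Db$ costs $O(Db\,\mathcal K_\nu)$, and I would charge this against $b$ being of order a past observation divided by $T^2$. If $\eta^*<\eta_Q$, then $\eta^*\lesssim T^{-8}/(Db)$, so the regret is tiny relative to $Hn_\nu$ or is absorbed. The rate must be chosen measurably at the phase start, but $\eta^*$ is deterministic given $b$, so this holds.

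\textbf{Step 3 (summing phases) — the main obstacle.} The argument above covers a single phase, but the scale $b$ is random and there may be up to $T$ phases. Since each trigger multiplies $b$ by more than $2T$, a phase with $b\gg(G+\sigma)T$ arises only on an event of probability $O(T^{-2})$ by Markov's inequality with the $p$th moment. I would define an exceptional event where some observation exceeds $T^{3/2}(G+\sigma)$, roughly, and bound the regret there by $Hn_\nu$ times its probability. Off this event only $O(1)$ phases have $b$ in the relevant range $[\,(G+\sigma)/T^{3},(G+\sigma)T]$. Phases with smaller $b$ are very short: each ends within a round where $\|g_t\|>T^3b$, or else $\eta_Q$ covers them. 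On phases with $b$ in range, the grid boundary errors are polynomially small. The delicate parts are the conditioning: Lemma~\ref{lem:overshoot} and \eqref{eq:h} must be applied conditionally on phase starts, which are stopping times, and the regret must be summed over a random number of phases. I would handle this by indexing phases by the scale doubling level $k$ and summing $\E[\cdot\,\ind\{\text{phase }k\text{ exists}\}]$, which gives the universal constant.
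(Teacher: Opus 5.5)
Your Steps~1 and~2 follow the paper's per-phase argument: relative-entropy extraction with the pathwise budget of Lemma~\ref{lem:potential}, defects through \eqref{eq:h}, Lemma~\ref{lem:overshoot}, a deterministic target rate placed on the grid at the phase start, and the $8Db$ charge against $\sum b\le 4M_T/T^2$, where $M_t=\max_{r\le t}\|g_r\|$.

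The genuine gap is the small-scale regime in Step~3. Phases with small $b$ are not short. A phase ends only when an observation exceeds $T^3b$, so with small mean subgradients and light noise a tiny-scale phase can last to the horizon. Since the first nonzero observation can be arbitrarily small compared with $G$, the number of such phases is also unbounded. On these phases $\eta_Q=2^{-Q}/(8Db)$ exceeds $\eta_*$ by an arbitrarily large factor. Your Step~2 terms $n_\nu(H^2\eta_Q+S^p\eta_Q^{p-1})$, stated through the worst-case constants $H,S$, are then not controlled. Comparing with $Hn_\nu$ does not help, since $Hn_\nu$ can be far above the target rate. Your range endpoint $(G+\sigma)/T^3$ is also the wrong place to switch to a direct bound: below it observations may be as large as $G+\sigma$, costing $D(G+\sigma)n_\nu$.

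The missing idea is to take this regime out of the phase machinery altogether.
First reduce to $G>0$ and $\sigma\le GT$; otherwise $Hn_\nu\le Sn_\nu^{1/p}$ and the sure bound already suffices. The trigger rule gives $b_t\ge M_{t-1}/T^3$. Hence on every round with $b_t<G/T^4$ all earlier observations are below $G/T$. At most one such round observes $\|g_t\|\ge G/T$, and after it the scale stays at least $G/T^4$. Pathwise, the observed comparison on these rounds is therefore at most $D(Gn_\nu/T+\max_tw_t\|g_t\|)$. The single heavy-tailed exit observation is handled by $\E\max_tw_t\|g_t\|\le G+\sigma n_\nu^{1/p}$, using predictable weights, conditional moments, and Jensen. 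Predictability of $\ind\{b_t<G/T^4\}$ transfers this to the true comparison, giving $2H+Sn_\nu^{1/p}$. The same reduction $\sigma\le GT$ is what leaves only $O(1)$ phases above this $G$-based threshold, through $\E M_T\le 2GT^2$ and $\Pr(M_T>GT^5)\le 2T^{-3}$.

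A second gap is how you combine the exceptional event with per-phase expectations. Step~2 replaces observed excesses by true ones through $\E[r_{t,i}\mid\F_{t-1}]=\langle v_t,x_t-y_{t,i}\rangle$. But once the regret on the exceptional event is bounded by $Hn_\nu$ times its probability, the remaining phase bounds are needed under $\E[\,\cdot\,\ind_{\EE}]$. Since $\ind_{\EE}$ depends on future observations, the noise part no longer averages to zero.

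The paper handles this in three steps:
it selects the first $m_0=9$ phases with $b\ge G/T^4$ predictably;
it collects their noise into one martingale $M=\sum_t\theta_t\langle\epsilon_t,a_t\rangle$, where $\theta_t$ indicates the selected phases and $a_t=\sum_i\nu_i\alpha_{t,i}(x_t-y_{t,i})$, with $\E|M|\le 2Sn_\nu^{1/p}$;
and it notes that all remaining terms on the bound side are nonnegative, so restricting them to $\EE$ only lowers their expectation.

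Your level-indexing remark can replace this, but only under three conditions. The first $O(1)$ such phases must be selected predictably. Their observed and true comparisons must be equated without any restriction to $\EE$. The unselected rounds with $b_t\ge G/T^4$, which occur only on $\EE^c$, must be charged $Hn_\nu\Pr(\EE^c)$. The proposal currently specifies neither route.
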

The relative-entropy dependence retains the benefit of substantial prior mass on several useful experts. For example, let $A$ be a fixed set of records sharing support $J$, and choose $\nu_i=\pi_i/\pi(A)$ on $A$. Then $n_\nu=|J|$ and $\KL(\nu\|\pi)=\log(1/\pi(A))$. The theorem compares to their prior-weighted average prediction on $J$. This is the standard distribution-comparison interpretation \citep{luo2015}, now under the conditional finite-$p$ model and expected calibration.

\paragraph{Proof structure.}
Appendix~\ref{app:master} gives the full proof. For positive wealths, the relative-entropy variational inequality gives
\begin{equation}\label{eq:variational-outline}
 \sum_i\nu_i\log W(i,l)
 \le\KL(\nu\|\pi)+\log(Q+1)+\log\Phi.
\end{equation}
Keeping $\nu$ inside this step preserves its entropy saving. Averaging individual-record bounds would instead charge $\sum_i\nu_i\log(1/\pi_i)$.

Let $M_t=\max_{r\le t}\|g_r\|$. The trigger implies $b_t\ge M_{t-1}/T^3$. Predictable rounds with $b_t<G/T^4$ contribute at most $2H+Sn_\nu^{1/p}$ in expectation. Except on an event of probability at most $2T^{-3}$, at most nine larger-scale phases occur. A grid rate is selected predictably in each of these phases, with $\eta\le\eta_*$ and $1/\eta\le2/\eta_*+8Db$, for a deterministic comparison target $\eta_*$. The potential budget and \eqref{eq:variational-outline} give $K_\nu=O(\KL(\nu\|\pi)+L^2)$ per phase. Lemma~\ref{lem:overshoot} bounds the expected sum of normalized penalty excesses over those phases by $36$.

The weighted activity in \eqref{eq:mixture-budget} controls both the defects and a single martingale sum. The resulting tradeoff, apart from the explicitly absorbed remainder, is
\begin{equation}\label{eq:tradeoff}
 \frac{2\mathcal A_\nu}{\eta_*}+400n_\nu H^2\eta_*+
 400n_\nu S^p\eta_*^{p-1},\qquad \mathcal A_\nu=9K_\nu.
\end{equation}
Balancing its two moment terms proves \eqref{eq:master-bound}. The target uses the deterministic weighted activity budget, so rate selection remains predictable even when availability depends on the observed history. The $L^2$ term comes from the pathwise calibration budget in Lemma~\ref{lem:potential}; the prior cost remains separate in \eqref{eq:master-bound}.

\subsection{Dynamic regret and temporal concentration}
\begin{theorem}[Additive interval adaptation]\label{thm:main}
Under \eqref{eq:model}, Algorithm~\ref{alg:master} with \eqref{eq:pool} and \eqref{eq:base-update} satisfies, for every fixed interval $I$ and fixed comparator path,
\begin{align}
 \E\Reg_I(u)&\le\min\{GDn,\ C[\mathcal B_I(\Lambda_I)+GD\sqrt n\,L+\sigma Dn^{1/p}L^{2\rho}]\},\label{eq:main}\\
 \E\Reg_I(u)&\le\min\!\left\{GDn,\ C'\!\left[GD\sqrt{n(\Lambda_I+L^2)}
 +\sigma Dn^{1/p}(\Lambda_I+L^2)^\rho\right]\right\}.\label{eq:main-effective}
\end{align}
The constants are universal for $1<p\le2$. The same decisions support every stated comparison, using none of $G,\sigma,p,I,P_I$.
\end{theorem}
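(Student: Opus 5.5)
The plan is to decompose the interval $I$ into its maximal dyadic cover and apply Theorem~\ref{thm:master} separately on each window. Proposition~\ref{prop:base} handles the base comparison, and the sum over windows is controlled by \eqref{eq:dyadic-power}.

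The cap $GDn$ is immediate from \eqref{eq:linearization}. For the main bound, fix $I$ with $n=|I|$ and write its cover as a disjoint union of untruncated dyadic windows $J_1,\dots,J_m$, with at most two per size. For each $J$ in the cover, let $\Lambda_J=1+P_J/D$, where $P_J$ is the comparator movement inside $J$, and take $j^*_J=\lfloor\log_2(|J|/\Lambda_J)\rfloor$. Then \eqref{eq:linearization} gives the split
\[
\E\Reg_I(u)\le\sum_J\E\sum_{t\in J}\langle g_t,x_t-x_t^{(j^*_J)}\rangle+\sum_J\E\sum_{t\in J}\langle g_t,x_t^{(j^*_J)}-u_t\rangle .
\]
The first (master) term uses $\langle g_t,\cdot\rangle$. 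Since $x_t$ and $y_{t,i}$ are predictable and $\E[g_t\mid\F_{t-1}]=v_t$, the tower property lets us replace $g_t$ by $v_t$. This puts each master term exactly in the form of \eqref{eq:master-bound} with $\nu=e_{(j^*_J,J)}$ and $n_\nu=|J|$, because availability is $\ind\{t\in J\}$.

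\textbf{Master term.} Theorem~\ref{thm:master} and Proposition~\ref{prop:prior} give, for each $J$,
\[
\mathcal K_J\le c_T(|J|,\Lambda_J)+L^2\le C_0\bigl(\log(2T/|J|)+\log(3+\log_2\Lambda_J)+L^2\bigr)\le C_1(L^2+\log\Lambda_J).
\]
Using $\log\Lambda_J\le\Lambda_J$ and the fact that $\log(T/|J|)\le L\le L^2$, this becomes $\mathcal K_J\le C_2(\Lambda_J+L^2)$. The master term is therefore bounded by a constant times
\[
GD\sqrt{|J|(\Lambda_J+L^2)}+\sigma D|J|^{1/p}(\Lambda_J+L^2)^\rho .
\]

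\textbf{Base term.} Proposition~\ref{prop:base} bounds the base term on $J$ by $3[GD\sqrt{|J|\Lambda_J}+\sigma D|J|^{1/p}\Lambda_J^\rho]$, which is already of the same form.

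\textbf{Summing over the cover.} It remains to sum terms of the shape $|J|^{1/2}(\Lambda_J+L^2)^{1/2}$ and $|J|^{1/p}(\Lambda_J+L^2)^\rho$ over the cover. Here Hölder's inequality is the natural tool. Since $\sum_J P_J\le P_I$ and the number of windows is $m\le 2N+2=O(L)$, we have $\sum_J\Lambda_J\le\Lambda_I+m$. With conjugate exponents $(2,2)$ this gives
\[
\sum_J|J|^{1/2}(\Lambda_J+L^2)^{1/2}\le\Bigl(\sum|J|\Bigr)^{1/2}\Bigl(\sum(\Lambda_J+L^2)\Bigr)^{1/2}\le\sqrt{n(\Lambda_I+m+mL^2)} .
\]
The noise term uses conjugate exponents $(p,p/(p-1))$ in the same way. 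The difficulty is that $mL^2=O(L^3)$, which overshoots the target. This is the main obstacle, and I would handle the $L^2$ part separately from the $\Lambda_J$ part. Using $(a+b)^{1/2}\le a^{1/2}+b^{1/2}$ (and the analogue for the power $\rho$), the $L^2$ contribution becomes $L\sum_J|J|^{1/2}$ and $L^{2\rho}\sum_J|J|^{1/p}$. By \eqref{eq:dyadic-power} with $\gamma=1/2$ and $\gamma=1/p\in[1/2,1)$, these are at most $7L\sqrt n$ and $7L^{2\rho}n^{1/p}$. This is exactly the additive term $GD\sqrt n\,L+\sigma Dn^{1/p}L^{2\rho}$ in \eqref{eq:main}.

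For the $\Lambda_J$ part, Hölder gives $\sqrt{n(\Lambda_I+m)}$. The extra $m=O(L)$ would only be absorbed into $L\sqrt n$, so I would avoid it more carefully. Writing $\Lambda_J=1+P_J/D$, the "1" contribution is $\sum_J|J|^{1/2}\le 7\sqrt n$ by \eqref{eq:dyadic-power}. Applying Hölder only to the $P_J/D$ part then yields $\sqrt{nP_I/D}$. Together these give $O(\mathcal B_I(\Lambda_I))$, which establishes \eqref{eq:main}.

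Finally, \eqref{eq:main-effective} follows from \eqref{eq:main} by concavity. We have $\sqrt{\Lambda_I}+L\le\sqrt2\sqrt{\Lambda_I+L^2}$ and $\Lambda_I^\rho+L^{2\rho}\le2(\Lambda_I+L^2)^\rho$.
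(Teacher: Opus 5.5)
Your proposal is correct and follows the paper's proof: maximal dyadic cover, the restart index from Proposition~\ref{prop:base}, Theorem~\ref{thm:master} applied to a point mass on each piece, and summation via \eqref{eq:dyadic-power}, Cauchy--Schwarz and H\"older with $\sum_J P_J\le P_I$, finishing with $x^a+y^a\le2(x+y)^a$. The only difference is how you bound the master complexity: the paper uses the uniform estimate $\mathcal K_i=O(L^2)$ from the last line of Proposition~\ref{prop:prior}, so its master terms carry no $\Lambda_J$ and sum directly by \eqref{eq:dyadic-power}, whereas your route through \eqref{eq:prior-cost} puts $\Lambda_J$ into $\mathcal K_J$, which is harmless but forces the extra splitting you carry out.
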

\begin{proof}
Take the deterministic maximal dyadic cover of $I$. For each piece $J$, select the fixed restart index from Proposition~\ref{prop:base}. In \eqref{eq:pool}, $\mathcal K_i=O(L^2)$, uniformly over records. The shared-gradient identity, conditional unbiasedness, Theorem~\ref{thm:master}, and Proposition~\ref{prop:base} add the master and base comparisons on $J$.

Write $x=P_I/D$. The power sums \eqref{eq:dyadic-power}, $\sum_JP_J\le P_I$, Cauchy--Schwarz, and H\"older yield
\begin{align*}
 \sum_J\sqrt{|J|\Lambda_J}&\le7\sqrt n+\sqrt{nx}\le8\sqrt{n\Lambda_I},\\
 \sum_J|J|^{1/p}\Lambda_J^\rho&\le7n^{1/p}+n^{1/p}x^\rho\le8n^{1/p}\Lambda_I^\rho.
\end{align*}
The static master terms sum by \eqref{eq:dyadic-power}. This proves \eqref{eq:main}; \eqref{eq:linearization} supplies its ceiling. For $0<a\le1$, $x^a+y^a\le2(x+y)^a$, giving \eqref{eq:main-effective} with a universal constant.
\end{proof}

\begin{corollary}[Static comparisons on every fixed interval]\label{cor:static}
For every fixed interval $I$ and point $u\in\X$, the same learner satisfies
\begin{equation}\label{eq:static}
 \E\Reg_I(u)\le\min\{GDn,\ C\mathcal B_I(1+L^2)\}
 \le C'\left[GD\sqrt n\,L+\sigma Dn^{1/p}L^{2\rho}\right].
\end{equation}
\end{corollary}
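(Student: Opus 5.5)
The plan is to specialize Theorem~\ref{thm:main} to a constant comparator and then simplify the bound algebraically. Take the constant path $u_t\equiv u$ for $t\in I$. It is deterministic and fixed, and its path length is $P_I=0$, so $\Lambda_I=1$. The same decisions produced by Algorithm~\ref{alg:master} with \eqref{eq:pool} and \eqref{eq:base-update} are used, and the learner still needs none of $G,\sigma,p,I$. Substituting $\Lambda_I=1$ into \eqref{eq:main-effective} gives
\[
\E\Reg_I(u)\le\min\!\left\{GDn,\ C'\!\left[GD\sqrt{n(1+L^2)}+\sigma Dn^{1/p}(1+L^2)^\rho\right]\right\}
=\min\{GDn,\ C'\mathcal B_I(1+L^2)\}.
\]
This is the first inequality in \eqref{eq:static}. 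The ceiling $GDn$ also follows directly from the almost-sure bound in \eqref{eq:linearization}.

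For the second inequality, drop the minimum in favor of its second argument and simplify $1+L^2$. Since $T\ge2$, we have $L=\log(2T)\ge\log4>1$. Hence $1+L^2\le2L^2$, which gives
\[
\sqrt{1+L^2}\le\sqrt2\,L,\qquad (1+L^2)^\rho\le2^\rho L^{2\rho}\le\sqrt2\,L^{2\rho}.
\]
The last step uses $\rho=(p-1)/p\le1/2$ for $1<p\le2$. Therefore $C'\mathcal B_I(1+L^2)\le\sqrt2\,C'\bigl[GD\sqrt n\,L+\sigma Dn^{1/p}L^{2\rho}\bigr]$. Renaming $\sqrt2\,C'$ as the universal constant gives the right-hand side of \eqref{eq:static}. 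No dependence on $p$ enters, because $2^\rho\le\sqrt2$ holds uniformly in $p$.

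No step is a genuine obstacle, since everything reduces to Theorem~\ref{thm:main}. The one point to check is that a constant comparator is a legitimate fixed path, so that the dyadic-cover argument applies. For each piece $J$ of the cover, the aligned index from Proposition~\ref{prop:base} is $j^*=\lfloor\log_2|J|\rfloor$, because $\Lambda_J=1$. This is the unrestarted-within-window trajectory, so the static case is already covered. Alternatively, one can apply \eqref{eq:main} with $\Lambda_I=1$ and absorb $\mathcal B_I(1)$ into the $L$ terms directly.
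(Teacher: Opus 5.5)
Your proposal is correct and matches the paper's proof: a constant comparator gives $P_I=0$ and $\Lambda_I=1$ in Theorem~\ref{thm:main}. The remaining step is exactly the paper's use of $L\ge\log4>1$, which gives $1+L^2\le2L^2$, together with $2^\rho\le\sqrt2$, so the constant is uniform in $p$.
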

\begin{proof}
A constant comparator has $P_I=0$ and $\Lambda_I=1$. Apply Theorem~\ref{thm:main} and use $L\ge\log4>1$.
\end{proof}
This specialization makes the cost of local evaluation visible without comparator movement. The full-horizon heavy-tailed static scale is $\mathcal B_{[T]}(1)$; calibration adds $L^2$ to the interval comparison. With supplied statistics, Theorem~\ref{thm:known} gives $\mathcal B_I(1+\chi_T(n))$, recovering the full-horizon static scale when $n=T$.

\begin{corollary}[Deterministic partitions]\label{cor:partition}
Let $U(J)$ be the right side of \eqref{eq:main}, including its own linear ceiling. For a fixed $I,u$, the same algorithm satisfies
\begin{equation}\label{eq:partition}
 \E\Reg_I(u)\le\min_{\mathcal Q\text{ a deterministic partition of }I}
 \sum_{J\in\mathcal Q}U(J).
\end{equation}
\end{corollary}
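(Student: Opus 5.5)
The plan is to split the regret over a partition and apply Theorem~\ref{thm:main} to each piece. Fix $I$, the comparator $u$, and an arbitrary deterministic partition $\mathcal Q$ of $I$ into consecutive intervals. Regret is additive over disjoint time sets, so
\[
\Reg_I(u)=\sum_{J\in\mathcal Q}\Reg_J(u|_J).
\]
Taking expectations and using linearity reduces \eqref{eq:partition} to a bound on each $\E\Reg_J(u|_J)$. Each piece $J$ is itself a fixed interval, and $u|_J$ is a fixed comparator path on it. Theorem~\ref{thm:main} holds for every fixed interval and path on the same decisions, so it gives $\E\Reg_J(u|_J)\le U(J)$. Here $U(J)$ is evaluated with $n=|J|$ and $\Lambda_J=1+P_J/D$, where $P_J$ is the path length of $u$ restricted to $J$.

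Summing these bounds gives $\E\Reg_I(u)\le\sum_{J\in\mathcal Q}U(J)$ for this $\mathcal Q$. The left side does not depend on $\mathcal Q$: the learner never uses the partition, and the decisions are fixed. A fixed interval has only finitely many deterministic partitions, so the minimum exists and we can minimize over $\mathcal Q$, which gives \eqref{eq:partition}. The ceiling built into each $U(J)$ is already covered by the almost-sure bound in \eqref{eq:linearization}, applied piecewise. Nothing further is needed.

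Two points need checking. First, the path length across piece boundaries is simply discarded. Since $P_J$ sums $\|u_t-u_{t-1}\|$ only over $t$ with both indices in $J$, we have $\sum_J P_J\le P_I$. This inequality is not even needed, because each $U(J)$ uses its own $P_J$. Second, Theorem~\ref{thm:main} must apply with every piece of $\mathcal Q$ chosen in advance. This holds because $\mathcal Q$ is deterministic, so each $J$ is a fixed interval, exactly as the theorem requires. I expect no substantive obstacle. The only subtlety is this requirement that the partition be deterministic, since a data-dependent partition would break the fixed-interval hypothesis of Theorem~\ref{thm:main}.
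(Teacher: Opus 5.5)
Your proposal is correct and follows the paper's proof: regret is additive over the pieces, Theorem~\ref{thm:main} is applied to each deterministic piece $J$ with its own $|J|$ and $P_J$, and the minimum over partitions is taken outside the expectation. One small simplification: $U(J)$ already includes the ceiling, because the bound from Theorem~\ref{thm:main} carries it, so you do not need to invoke \eqref{eq:linearization} separately.
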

\begin{proof}
For each deterministic partition, add the expected bounds for its pieces. Their numerical minimum also bounds the same expectation. The partition selection is outside expectation.
\end{proof}
For example, in the noiseless case with movement confined to an initial episode of length $m$, followed by a constant comparator, \eqref{eq:partition} gives $O(GD[m+\sqrt nL])$. With $T=n$, $m\asymp n^{2/3}$, and $P_I\asymp Dm$, this is $O(GDn^{2/3})$, whereas the total-path expression is of order $GDn^{5/6}$. The comparison concerns the two upper bounds. Local evaluation retains information about the temporal concentration of movement that is absent from its total path length alone.

\section{The Statistical Cost of Interval Adaptation}\label{sec:price}
The polynomial fixed-window benchmark follows from \citet[Theorem 5.1]{aggarwal2026}: for every fixed length $n\ge2$ and path budget $P$, every learner has an admissible fixed instance with expected regret at least
\begin{equation}\label{eq:inherited-lower}
 \frac1{20}\min\!\left\{GD\sqrt{n(1+P/D)}+\sigma Dn^{1/p}(1+P/D)^\rho,\ GDn\right\}.
\end{equation}
To place it in a longer run, use zero losses and observations outside the window. The exact-gradient and rare-revelation constructions in that theorem satisfy conditional moments; the entering learner state is simply an independent initial seed. This supplies the polynomial benchmark on every bounded convex domain. The results below address the additional cost of locating an unannounced evaluation window.

\subsection{A full-horizon-preserving upper bound with supplied statistics}
Define $\chi_T(n)=1+\log(T/n)$. The supplied-statistics variant uses the same bounded score and shared-gradient bases, but replaces calibration by known penalties. Both learners use the nonuniform prior in \eqref{eq:pool}. Its complete algorithm and proof are in Appendix~\ref{app:known}.
\begin{theorem}[Supplied-statistics interval guarantee]\label{thm:known}
When $G,\sigma,p$ are supplied, with $G>0$, there is a single predictable learner satisfying
\begin{equation}\label{eq:known}
 \E\Reg_I(u)\le\min\!\left\{GDn,\ C\!\left[
 GD\sqrt{n(\Lambda_I+\chi_T(n))}
 +\sigma Dn^{1/p}(\Lambda_I+\chi_T(n))^\rho\right]\right\}
\end{equation}
for every fixed $I,u$. The separated form has master terms
$C[GD\sqrt{n\chi_T(n)}+\sigma Dn^{1/p}\chi_T(n)^\rho]$ in addition to $C\mathcal B_I(\Lambda_I)$. It uses one gradient query, $O(\log T)$ projections, and $O(d\log T+\log^2T)$ other arithmetic and storage per round. Neither the evaluation interval nor comparator movement is supplied. All constants are universal over $p$. More explicitly, with $c=c_T(n,\Lambda_I)$ from \eqref{eq:prior-refinement}, the same learner satisfies
\begin{equation}\label{eq:known-refined}
 \E\Reg_I(u)\le\min\!\left\{GDn,\ C\left[\mathcal B_I(\Lambda_I)
 +GD\sqrt{nc}+\sigma Dn^{1/p}c^\rho\right]\right\}.
\end{equation}
\end{theorem}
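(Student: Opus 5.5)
With $G,\sigma,p$ supplied, I would build a simplified version of Algorithm~\ref{alg:master}. It keeps the same records, the same prior \eqref{eq:pool}, the same bounded score $F$, and the same shared-gradient AdaGrad bases \eqref{eq:base-update}. The calibration counters, the doubling guesses and the observation-scale triggers are dropped. Each rate $\eta_l$ instead gets a fixed known penalty $h(\eta_l)=4\eta_l^2H^2+4\eta_l^pS^p$, with wealth update $W^+(i,l)=W(i,l)F(z_{t,i,l})/(1+2h(\eta_l))$ or an equivalent exponential penalty. The rate grid is fixed in advance: $\eta_l=2^{-l}/(8(H+S))$ for $l$ up to $O(\log T)$. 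This covers every target rate needed for windows of length $1\le n\le T$, and it removes the $\log$-many phases.

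The first step is the master comparison. Mixture cancellation gives $\Phi^+/\Phi^-=1+C_t-\Gamma_t$. By \eqref{eq:h}, $\E[\curv(z_{t,i,l})\mid\F_{t-1}]\le h(\eta_l)$, so the known penalty makes $\Phi$ a nonnegative supermartingale and $\E\log\Phi\le0$. This takes the place of the pathwise $O(L^2)$ budget of Lemma~\ref{lem:potential}, and it is exactly where the $L^2$ term disappears.

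Next apply the variational inequality \eqref{eq:variational-outline} with $\nu=e_{(j^*,J)}$, using $\log(Q+1)=O(\log\log T)$. For each fixed $l$ this gives
\[
\sum_{t\in J}\bigl[z_{t,i,l}-\defect(z_{t,i,l})-\log(1+2h(\eta_l))\bigr]\le\log(1/\pi_i)+\log(Q+1)+\log\Phi .
\]
Take expectations, use \eqref{eq:h} to bound the expected defect, and divide by $\eta_l$. This yields
\[
\E\sum_{t\in J}\langle v_t,x_t-y_{t,i}\rangle\lesssim \frac{1+\log(1/\pi_i)+\log\log T}{\eta_l}+n_J\bigl(\eta_lH^2+\eta_l^{p-1}S^p\bigr).
\]
Because $G,\sigma,p,n_J$ and the prior cost are deterministic, the balancing rate can be chosen deterministically from the grid, within a factor of two of the optimum or at the largest grid rate. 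The result is $O(H\sqrt{n_Jc}+Sn_J^{1/p}c^\rho)$ with $c\ge 1+\log(1/\pi_i)+\log(Q+1)$.

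For the windows in the cover, Proposition~\ref{prop:prior} gives $1+\log(1/\pi_{(j^*,J)})\le c_T(n_J,\Lambda_J)$. The iterated-logarithm part of $c_T$ should absorb $\log(Q+1)$, since $\log(3+\log_2(T/n))$-type terms are at least a constant. The hardest part is that $\log(Q+1)\approx\log\log T$ is \emph{not} dominated by $c_T(n,\lambda)$ when $n=T$. To handle this I would not split rates uniformly. I would give rate $l$ a mass $q_{i,l}\propto\pi_i\,\omega_l$ with $\omega_l$ polynomially decaying in the distance from the rate that is balanced for record $i$'s own support length. This is computable because the statistics and the support are known. Its cost is then $O(\log(3+\log_2\Lambda))$ instead of $\log\log T$.

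The last step is to combine over the maximal dyadic cover, exactly as in the proof of Theorem~\ref{thm:main}. Proposition~\ref{prop:base} bounds the base terms. The master terms sum using \eqref{eq:dyadic-power}, together with the fact that $n_J c_T(n_J,\Lambda_J)$ increases in $n_J$ up to $n$. The pieces of one size number at most two per scale, and $c_T(n_J,\cdot)$ for smaller $J$ grows only logarithmically in $n/n_J$. The factor $|J|^{1/2}$ decays geometrically in the scale, so the weighted sum is $O(\sqrt{n\,c_T(n,\Lambda_I)})$, with the analogous statement for the exponent $1/p$ and $c^\rho$. Convexity of $\Lambda\mapsto\log(3+\log_2\Lambda)$ under the sums, together with $\log(3+\log_2\Lambda)\lesssim\Lambda$, gives \eqref{eq:known-refined}.

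Since $c_T(n,\Lambda)\le C(\chi_T(n)+\Lambda)$, the combination $x^a+y^a\le2(x+y)^a$ then gives \eqref{eq:known}, and the ceiling $GDn$ follows from \eqref{eq:linearization}. The operation counts follow because the pool has $O(\log^2T)$ awake pairs and there are no calibration counters.
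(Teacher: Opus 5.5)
Your regret analysis follows the paper's proof in Appendix~\ref{app:known} closely. Known penalties make the complete sleeping potential a nonnegative supermartingale, so $\E\log\Phi\le0$ replaces the pathwise $O(L^2)$ budget of Lemma~\ref{lem:potential}. The per-record extraction uses $\eta r\le\log F(\eta r)+\defect(\eta r)$ together with \eqref{eq:h}, and Proposition~\ref{prop:prior} prices the selected record. Your cover sketch is Lemma~\ref{lem:weighted-cover}, which uses $c_T(|J|,\Lambda_J)\le c_T(n,\Lambda_I)+2(r+1)\log2$ and $2\sum_{r\ge0}2^{-r/2}(r+2)<32$. In that step you need only monotonicity of $c_T$ in $\lambda$ together with $\Lambda_J\le\Lambda_I$. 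Note that $\Lambda\mapsto\log(3+\log_2\Lambda)$ is concave, not convex, so the convexity remark should be dropped.

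You diverge from the paper in the rate grid, and as designed it fails part of the statement. Each record $i=(j,J)$ has deterministic $n_i=|J|$ and $\pi_i$. With $G,\sigma,p$ supplied, the paper therefore gives record $i$ the single rate $\eta_i$ of \eqref{eq:known-rates}, balanced with $k_i=1+\log(1/\pi_i)$, and penalty $h_i$. As a result, $\log(Q+1)$ never appears and there is no top-of-grid case. Your polynomial rate masses use the same observation less efficiently. If they are centered at the rate balanced with $k_i$, the extra cost is $O(1)$ rather than a $\Lambda$-dependent term, and the other $Q$ rates per record are dead weight. However, every record in your scheme carries $Q+1=\Theta(\log T)$ rates, so there are $\Theta(\log^3T)$ awake pairs. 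Your closing claim of $O(\log^2T)$ awake pairs is therefore false for your own learner, and the stated $O(d\log T+\log^2T)$ arithmetic and storage is not established. Collapsing $\omega_{\cdot\mid i}$ to a point mass at the balanced rate repairs this, and the result is exactly the paper's learner.

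There is also a smaller omission. When the balanced rate exceeds your top rate $1/(8(H+S))$, the term $k_i/\eta_0=8(H+S)k_i$ is not bounded by $C[H\sqrt{n_ik_i}+Sn_i^{1/p}k_i^\rho]$ if $n_i\ll k_i$. You need to note that this case forces $n_i\lesssim k_i$, and then fall back on the sure bound $Hn_i\lesssim H\sqrt{n_ik_i}$.
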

The analysis makes the complete sleeping potential a nonnegative supermartingale, so its expected logarithm at a fixed endpoint is nonpositive. Proposition~\ref{prop:prior} retains one leading location logarithm, with iterated-logarithmic scale and restart costs. Formula~\eqref{eq:known-refined} preserves these costs before the simpler bound \eqref{eq:known} absorbs them into $\Lambda_I+\chi_T(n)$. The weighted cover preserves $\chi_T(n)$ rather than replacing it by $\log T$. At $I=[T]$, \eqref{eq:known} is the full-horizon polynomial benchmark with universal constants. The fully parameter-free algorithm in Theorem~\ref{thm:main} instead pays its displayed $L^2$ cost on that interval.

\begin{corollary}[Full-horizon static rate with supplied statistics]\label{cor:known-static}
For every fixed point $u\in\X$, the learner of Theorem~\ref{thm:known} satisfies
\begin{equation}\label{eq:known-static}
 \E\Reg_{[T]}(u)\le\min\{GDT,\ 40\mathcal B_{[T]}(1)\}.
\end{equation}
\end{corollary}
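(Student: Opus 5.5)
We specialize Theorem~\ref{thm:known} to $I=[T]$ and a constant comparator, and track the constants explicitly through the refined bound \eqref{eq:known-refined}. A constant comparator gives $P_{[T]}=0$, hence $\Lambda_{[T]}=1$. The cover of $[T]$ is then the single root window $J=[T]$ at scale $k=N$. If $T$ is not a power of two, the root window is the truncated support $[T]\in\DD_N$ with $m_N=1$. So we do not decompose at all; we compare directly with the single record $(N,[T])$.

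\textbf{Step 1: prior cost.} Since $\Lambda=1$, the selected restart index is $j^*=N$, the top restart. Proposition~\ref{prop:prior} gives $\pi_{(N,[T])}\ge1/4$, so the prior cost is at most $1+\log4$. This is an absolute constant; it avoids the iterated-logarithm terms in $c_T$, which matter only for $n<T$.

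\textbf{Step 2: base comparison.} Trajectory $N$ never restarts within $[T]$, so it is plain AdaGrad from $x_0$ over the full horizon. Proposition~\ref{prop:base}, or its appendix alignment argument applied to the truncated root window, gives the bound $3\mathcal B_{[T]}(1)$.

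\textbf{Step 3: master comparison.} The supplied-statistics master with $\nu=e_{(N,[T])}$ and $n_\nu=T$ contributes $C_0[GD\sqrt{T(1+\log 4)}+\sigma DT^{1/p}(1+\log4)^\rho]$, where $C_0$ is the master constant from Appendix~\ref{app:known}. Since $(1+\log4)^\rho\le1+\log 4$, this is at most $C_0\sqrt{1+\log4}\,\mathcal B_{[T]}(1)$, up to the analogous factor in the noise term.

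\textbf{Step 4: combine.} Adding the base and master contributions through the shared-gradient identity and \eqref{eq:linearization} gives $(3+C_0')\mathcal B_{[T]}(1)$. The pathwise ceiling $GDT$ comes from \eqref{eq:linearization}.

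The main obstacle is the explicit numerical constant $40$. The theorem statement only says ``universal $C$'', so the specific master constant has to be extracted from the known-penalty proof in Appendix~\ref{app:known}. That proof makes the full sleeping potential a nonnegative supermartingale, giving $\E\log W\le\log(1/q_{i,l})$. We would then optimize over the rate grid using the known $G,\sigma,p$, picking a grid rate within a factor two of the deterministic optimum $\eta_*$. Our aim is a master bound of roughly $2\sqrt{2}\sqrt{\log(4(Q'+1))\cdot\cdots}$ times each term.

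One point needs care. If the supplied-statistics learner still splits prior mass across rates as $\pi_i/(Q+1)$, a $\log(Q+1)=O(\log\log T)$ term would appear. This is incompatible with a pure constant $40$. We would therefore first check that the known-statistics algorithm uses a $T$-independent rate assignment, for instance a single rate per record determined by the known $G$, $\sigma$, $p$ and the record length. Alternatively, it may use a summable prior over rates centered at the deterministic optimal index, so that the selected rate's extra cost is $O(1)$. Given that, the constant counting reduces to bounding $(1+\log 4)$-type factors and the factor-two grid loss, which we expect sums below $40$.
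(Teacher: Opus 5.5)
Your route is the paper's: compare directly with the root record $(N,[T])$, and use $\pi_{(N,[T])}\ge1/4$ so that $k_{(N,[T])}\le1+\log4$. Then bound the uninterrupted AdaGrad base, add the two comparisons through the shared-gradient identity, and cap by \eqref{eq:linearization}. The step you left open, the constant $40$, closes once you know that the supplied-statistics learner uses exactly one deterministic rate per record, \eqref{eq:known-rates}. There is no rate grid and no $\log(Q+1)$ term. Its individual-record bound in Appendix~\ref{app:known} is $7H\sqrt{n_ik_i}+24Sn_i^{1/p}k_i^\rho$, so the factor-two grid loss and the summable rate prior you sketch are not needed.

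There are two details to repair.

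First, $j^*=\lfloor\log_2(T/\Lambda)\rfloor$ equals $N$ only when $T$ is a power of two. Otherwise $j^*=N-1$, whose record has prior mass only about $1/12$ and whose trajectory restarts once inside $[T]$. So do not pass through $j^*$, and do not cite Proposition~\ref{prop:base}, which concerns trajectory $j^*$. A constant comparator needs no restart. Applying \eqref{eq:inherited-expert} on the single block $[T]$ with zero movement, together with Lemma~\ref{lem:moment}, gives the base bound $\sqrt2(H\sqrt T+ST^{1/p})$ directly.

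Second, watch the arithmetic. The paper's coefficients are $\sqrt2+7\sqrt{1+\log4}\approx12.2$ and $\sqrt2+24(1+\log4)^\rho\le\sqrt2+24\sqrt{1+\log4}\approx38.5$. Suppose you keep your base constant $3$ and bound the factors $24$ and $(1+\log4)^\rho$ separately. You then get $3+24\sqrt{1+\log4}\approx40.07$, which just misses $40$. There are two fixes. One is to use the $\sqrt2$ base bound. The other is to bound the product jointly: $2\cdot12^{1/p}(1+\log4)^{1-1/p}\le24$ because $1+\log4<12$, which gives a noise coefficient of at most $27$.
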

The root record has prior mass at least $1/4$ and runs AdaGrad without a restart. Direct comparison to this record gives the constant in \eqref{eq:known-static}, uniformly over $p$; the full calculation appears in Appendix~\ref{app:known}. This explains the role of favoring long windows even for a static comparator.

\subsection{A regret--testing tradeoff}
The next theorem quantifies the conflict between small regret on a persistent environment and rapid identification of an unannounced exceptional window. It applies even when all statistical parameters and the collection of candidate windows are known.

\begin{theorem}[From full-horizon regret to a hard interval]\label{thm:testing}
Let $T=Kn$, where $K\ge2$ and $n\ge1$ are integers. Fix $\sigma>0$, $p\in(1,2]$, and $q\in(0,1)$, and set
\begin{equation}\label{eq:testing-parameters}
 \Delta=\frac{\sigma}{8}q^\rho\le G,\qquad c_q=\frac q2\log3.
\end{equation}
For every learner there is a fixed constant linear-loss instance satisfying \eqref{eq:model}, with gradient norm $\Delta$, called the null instance. Suppose its expected regret to its constant minimizer is at most $B$, where $0<B<D\Delta T$. Then one of the $K$ consecutive length-$n$ windows is the exceptional window in another admissible fixed instance, and on that window, against its constant minimizer,
\begin{equation}\label{eq:testing-lower}
 \E\Reg_I(u)\ge D\Delta n
 \left[1-\frac{nc_q+\log2}{\log(D\Delta T/B)}\right]_+.
\end{equation}
Both instances have independent oracle outputs and satisfy the same conditional central-moment bound. When the null regret is zero, the exceptional-window regret is $D\Delta n$.
\end{theorem}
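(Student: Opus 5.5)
The plan is a two-point (null versus window) construction on a single direction of the domain, a ternary oracle whose per-round relative entropy is exactly $c_q$, and a Bernoulli data-processing step that turns small null regret into a large window regret.

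\textbf{Geometry and the oracle.} Fix $x_\pm\in\X$ with $\|x_+-x_-\|=D$ (compactness gives such a pair), and set $e=(x_+-x_-)/D$. The width of $\X$ in direction $e$ is then exactly $D$. Write $m=\min_{x\in\X}\langle e,x\rangle$ and $s_t=\langle e,x_t\rangle-m\in[0,D]$. The oracle outputs $g_t=Ye$, where the scalar $Y$ is independent across rounds and takes values in $\{-a,0,a\}$ with
\[
\Pr(Y=a)=\tfrac q2(1+\theta),\quad \Pr(Y=-a)=\tfrac q2(1-\theta),\quad \Pr(Y=0)=1-q .
\]
We take $\theta=\tfrac12$ and $a=2\Delta/q$, so the mean is $qa\theta=\Delta$. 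Replacing $\theta$ by $-\theta$ gives mean $-\Delta$. Since $a=\sigma q^{-1/p}/4$ and $|Y-\Delta|\le 2a$, the central moment satisfies
\[
\E|Y-\Delta|^p\le q(2a)^p+\Delta^p\le\sigma^p\bigl(2^{-p}+8^{-p}\bigr)\le\sigma^p,
\]
and the same bound holds after the sign flip. Together with $\Delta\le G$, both laws satisfy \eqref{eq:model}. The per-round relative entropy between the two laws is
\[
q\,\theta\log\frac{1+\theta}{1-\theta}=\tfrac q2\log3=c_q .
\]

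\textbf{The instances.} The null instance uses the loss $\ell_t(x)=\Delta\langle e,x\rangle$ on every round, with the $+\theta$ oracle. Its regret to the constant minimizer is $\Delta\sum_t s_t$, so $\E_0\sum_t s_t\le B/\Delta$. For window $I_k$, $k=1,\dots,K$, the instance $P_k$ flips both the loss and the oracle sign on $I_k$ only. Against the constant maximizer of $\langle e,\cdot\rangle$, its window regret is $\Delta\sum_{t\in I_k}(D-s_t)$.

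\textbf{Change of measure.} Set $Y_k=\frac1{nD}\sum_{t\in I_k}s_t\in[0,1]$. Averaging over the $K$ windows yields some $k$ with
\[
\E_0Y_k\le\beta:=\frac{B}{D\Delta T}<1 .
\]
The decisions $x_t$, $t\in I_k$, are functions of the seed and of observations before the end of $I_k$. The two laws of these observations differ only on the rounds in $I_k$, so the chain rule gives $\KL(P_k\|P_0)\le nc_q$ on the relevant $\sigma$-field.

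Let $\xi$ be a Bernoulli variable with success probability $Y_k$, drawn given the trajectory. Data processing then gives
\[
\KL\bigl(\mathrm{Bern}(\E_kY_k)\,\|\,\mathrm{Bern}(\E_0Y_k)\bigr)\le nc_q .
\]
Combining this with the elementary bound $\KL(\mathrm{Bern}(\alpha)\|\mathrm{Bern}(b))\ge\alpha\log(1/b)-\log2$ and monotonicity in $b\le\beta$ gives
\[
\E_kY_k\le\frac{nc_q+\log2}{\log(1/\beta)} .
\]
Since the window regret equals $D\Delta n(1-\E_kY_k)$, this yields \eqref{eq:testing-lower}.

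\textbf{The zero-regret case.} If $B=0$, then $Y_k=0$ holds $P_0$-almost surely. Because every ternary outcome has positive probability under both laws, $P_k\ll P_0$, so $Y_k=0$ also $P_k$-almost surely, and the window regret is exactly $D\Delta n$.

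The main care point is the relative-entropy step. One must verify that the decisions on $I_k$ are measurable with respect to observations whose two laws differ only on $I_k$, including the predictable, history-dependent choice of points. One must also check that the Bernoulli randomization legitimately transfers the expectation bound on $Y_k$.
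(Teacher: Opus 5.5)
Your proposal is correct and follows the paper's proof essentially step for step. It uses the same diameter direction, the same ternary oracle with masses $(3q/4,q/4,1-q)$ on $(\pm A,0)$ with $A=\sigma q^{-1/p}/4$, the same averaging choice of a window with null exploration at most $B/(D\Delta T)$, the chain-rule divergence $nc_q$, and the bound $\operatorname{kl}(\beta\|\alpha)\ge\beta\log(1/\alpha)-\log 2$. Your Bernoulli draw with success probability $Y_k$ plays exactly the role of the paper's uniform round $U$ and threshold $V$, and your direct moment estimate $q(2a)^p+\Delta^p$ replaces the paper's $2^p\E|Y|^p$ with no substantive change.
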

\begin{proof}
\textbf{Geometry and oracle.} Use a diameter direction as in \citet[Lemma 5.2]{aggarwal2026}. There exist $a,b\in\X$ with $\|b-a\|=D$. Put $v=(b-a)/D$, $c=(a+b)/2$, and $z(x)=\langle v,x-c\rangle\in[-D/2,D/2]$. Set
\[
 A=\frac{\sigma}{4q^{1/p}}.
\]
The scalar laws $P_+$ and $P_-$ put masses $(3q/4,q/4,1-q)$ and $(q/4,3q/4,1-q)$, respectively, on $(A,-A,0)$. Their means are $\pm\Delta$. Moreover,
\begin{align*}
 \E|Y-\E Y|^p&\le2^p\E|Y|^p
 =2^p qA^p\notag\\
 &=2^{-p}\sigma^p\le\sigma^p,
 &\KL(P_-\|P_+)&=c_q.
\end{align*}
Return $g_t=Y_tv$ independently across rounds and independently of the query point. The null loss is $\ell_t(x)=\Delta z(x)$ and its oracle is $P_+$. In the alternative for window $I_j$, use $-\Delta z(x)$ and $P_-$ on $I_j$, retaining the null outside. All losses are fixed, and their selected gradients have norm $\Delta\le G$.

\textbf{A window with small null exploration.} Define $w(x)=z(x)/D+1/2\in[0,1]$. The null regret to $a$ is $D\Delta\sum_t w(x_t)$. Thus some deterministic window $I_j$ satisfies
\[
 \alpha_j:=\frac1n\sum_{t\in I_j}\E_0 w(x_t)
 \le\frac{B}{D\Delta T}=:\varepsilon<1.
\]
The index depends on the learner's distribution under the fixed null instance, not on a realized transcript. Choose this index for the alternative, and write
$\beta_j=n^{-1}\sum_{t\in I_j}\E_jw(x_t)$.

\textbf{Information and interval regret.} By the chain rule for relative entropy, the alternative transcript has divergence $nc_q$ from the null transcript: only the $n$ oracle draws in $I_j$ change their laws. Adaptive actions and the independent learner seed add no divergence. Adjoin an independent uniform round $U$ in $I_j$ and $V\sim\operatorname{Unif}[0,1]$. The event $\{V\le w(x_U)\}$ has probabilities $\alpha_j$ and $\beta_j$ under the two laws. Data processing and the binary-entropy bound give
\begin{align*}
 nc_q&\ge\operatorname{kl}(\beta_j\|\alpha_j)\notag\\
 &\ge\beta_j\log(1/\alpha_j)-\log2\notag\\
 &\ge\beta_j\log(1/\varepsilon)-\log2.
\end{align*}
If $\alpha_j=0$, absolute continuity forces $\beta_j=0$ and the conclusion is immediate. Otherwise rearrange the display. The alternative minimizer on $I_j$ is $b$, and its interval regret is exactly $D\Delta n(1-\beta_j)$. This proves \eqref{eq:testing-lower}, with the positive part justified by nonnegative roundwise regret. A zero null regret gives $\alpha_j=0$ on every window and hence the final assertion.
\end{proof}

\begin{corollary}[A necessary moment-dependent interval price]\label{cor:statistical-price}
Fix $C_0\ge1$, $p\in(1,2]$, and integers $T=Kn$ satisfying
\begin{equation}\label{eq:lower-regime}
 n\ge1+\log K,\qquad
 \rho\log K\ge4\bigl[1+\log(16C_0)\bigr].
\end{equation}
Consider any learner, even supplied with $G=\sigma>0$ and $p$, whose full-horizon expected regret to every fixed constant comparator is at most
$C_0D(G\sqrt T+\sigma T^{1/p})$ on every admissible instance. There is an admissible fixed instance and a deterministic length-$n$ interval, with a constant comparator on it, such that
\begin{equation}\label{eq:stat-price}
 \E\Reg_I(u)\ge\frac1{64}\sigma Dn^{1/p}(\log K)^\rho.
\end{equation}
In this regime the supplied-statistics upper bound \eqref{eq:known} for a static comparator is $O(\sigma Dn^{1/p}\chi_T(n)^\rho)$. Thus the noise power of the interval logarithm is sharp under the stated full-horizon requirement.
\end{corollary}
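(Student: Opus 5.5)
We apply Theorem~\ref{thm:testing} with a choice of $q$ made so that $\Delta n$ is of order $\sigma n^{1/p}(\log K)^\rho$ while the information term $nc_q$ remains a small fraction of $\log(D\Delta T/B)$. The natural choice is $q=\log K/n$, which lies in $(0,1)$ because $n\ge1+\log K$. Then $nc_q=\tfrac12\log3\,\log K$ and $\Delta=\tfrac{\sigma}{8}(\log K/n)^\rho$, so that
\[
D\Delta n=\tfrac18\sigma Dn^{1-\rho}(\log K)^\rho=\tfrac18\sigma Dn^{1/p}(\log K)^\rho .
\]
To have $\Delta\le G=\sigma$ we need $q^\rho/8\le1$, which holds. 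The null instance has gradient norm $\Delta\le G$ and satisfies the moment bound $\sigma$, so the full-horizon hypothesis gives $B\le C_0D(\sigma\sqrt T+\sigma T^{1/p})\le 2C_0D\sigma T^{1/p}$, since $p\le2$. If this bound is not below $D\Delta T$, we simply enlarge $B$ to it; a smaller actual regret only strengthens the conclusion. The case of zero null regret is covered directly by the theorem.

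The key computation concerns the ratio
\[
\frac{D\Delta T}{B}\ge\frac{\tfrac18\sigma q^\rho T}{2C_0\sigma T^{1/p}}=\frac{T^\rho q^\rho}{16C_0}=\frac{(K\log K)^\rho}{16C_0}.
\]
This uses $T^{1-1/p}=T^\rho$ and $Tq=K\log K$. Taking logarithms gives
\[
\log(D\Delta T/B)\ge\rho\log K+\rho\log\log K-\log(16C_0)\ge\rho\log K-\log(16C_0).
\]
The term $\rho\log\log K$ is dropped because $\log K\ge1$ follows from the regime. The second condition in \eqref{eq:lower-regime} gives $\log(16C_0)\le\tfrac14\rho\log K$, so $\log(D\Delta T/B)\ge\tfrac34\rho\log K$. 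In particular the ratio exceeds one, so $B<D\Delta T$ as Theorem~\ref{thm:testing} requires.

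The main obstacle is the numerator $nc_q+\log2\approx0.55\log K+0.69$. It is not small compared with $\tfrac34\rho\log K$ when $\rho\le1/2$, so the bracket in \eqref{eq:testing-lower} could vanish. The fix is to shrink $q$ by a factor: take $q=\theta\rho\log K/n$ with a small absolute $\theta$, for instance $\theta=1/4$. Then $nc_q\le0.14\,\rho\log K$, and the regime gives $\log2\le\tfrac14\rho\log K$, so the numerator is at most roughly $0.39\,\rho\log K$. For the denominator, $T^\rho q^\rho$ acquires a factor $(\theta\rho)^\rho\ge e^{-1/e}\theta$, which costs only an additional constant inside the logarithm. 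That constant is absorbed by the ``$1+$'' slack in the regime, since $\log(1/\theta)\le1+\log 16$ roughly. The bracket is then at least about $1/4$.

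The prefactor now carries $(\theta\rho)^\rho$, which is bounded below by the absolute constant $e^{-1/e}/4$ uniformly over $p$. Combining the prefactor $\tfrac18(\theta\rho)^\rho$ with the bracket of about $1/4$ should yield the stated $1/64$ after checking the constants. The final sentence of the corollary follows by noting that $\log K=\log(T/n)\le\chi_T(n)$ and that, in this regime, the static noise term of \eqref{eq:known} dominates the gradient term $GD\sqrt{n\chi_T(n)}$ with $G=\sigma$, because $n^{1/p}\chi^\rho\ge\sqrt{n\chi}$ when $\chi\le n$.
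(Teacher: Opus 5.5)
Your route is the paper's: apply Theorem~\ref{thm:testing} with $B=2C_0\sigma DT^{1/p}$ and, after your correction, exactly the paper's choice $q=\rho\log K/(4n)$. Your admissibility checks and the closing comparison via $\chi_T(n)=1+\log K\le n$ also match. The one step not established is the constant $1/64$, and your own estimates do not reach it. A bracket of about $1/4$ together with $(\theta\rho)^\rho\ge e^{-1/e}/4$ gives only $\tfrac18\cdot\tfrac{e^{-1/e}}{4}\cdot\tfrac14\approx1/185$. Moreover, the constant you push into the logarithm is $1/e+\log4\approx1.75$, while the ``$1+$'' in the regime supplies only $1$.

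The missing observation is that nothing needs absorbing. Do not split $(qT)^\rho=(K\cdot\tfrac14\rho\log K)^\rho$ into $(\theta\rho)^\rho(\log K)^\rho$. Write $b_0=\rho\log K$. The regime gives $b_0\ge4(1+\log16)>8$, so $b_0/4\ge2$ and
\[
\log(D\Delta T/B)=b_0+\rho\log(b_0/4)-\log(16C_0)\ge b_0-\log(16C_0)\ge\tfrac34 b_0 .
\]
Next, $nc_q=\tfrac{\log3}{8}b_0\le\tfrac14 b_0$ and $\log2\le1\le\tfrac18 b_0$, so the numerator is at most $\tfrac38 b_0$. Hence the bracket in \eqref{eq:testing-lower} is at least $\tfrac12$. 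With your cruder $\log2\le\tfrac14 b_0$ it falls just below $\tfrac12$.

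Finally, $D\Delta n=\tfrac{\sigma D}{8}n^{1/p}(\log K)^\rho(\rho/4)^\rho$. For $0<\rho\le\tfrac12$ you have $(\rho/4)^\rho=\rho^\rho4^{-\rho}\ge\tfrac12\cdot\tfrac12$, using $\rho^\rho\ge e^{-1/e}\ge\tfrac12$. This gives exactly $\tfrac18\cdot\tfrac14\cdot\tfrac12=\tfrac1{64}$.
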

\begin{proof}
Set $b_0=\rho\log K$ and $q=b_0/(4n)\le1/8$. The null guarantee is at most $B=2C_0\sigma DT^{1/p}$. With \eqref{eq:testing-parameters},
\begin{align*}
 \log(D\Delta T/B)
 &=\rho\log(qT)-\log(16C_0)\notag\\
 &=b_0+\rho\log(b_0/4)-\log(16C_0)\notag\\
 &\ge3b_0/4.
\end{align*}
Here \eqref{eq:lower-regime} implies $b_0\ge8$ and $b_0\ge4\log(16C_0)$. In particular $B<D\Delta T$. Also
$nc_q+\log2\le b_0/4+\log2\le3b_0/8$.
Theorem~\ref{thm:testing} therefore gives at least $D\Delta n/2$, which equals
\[
 \frac{\sigma D}{16}\,n^{1/p}(\log K)^\rho(\rho/4)^\rho
 \ge\frac{\sigma D}{64}\,n^{1/p}(\log K)^\rho.
\]
The last inequality uses $\rho^\rho\ge1/2$ and $4^{-\rho}\ge1/2$ for $0<\rho\le1/2$. Finally $\chi_T(n)=1+\log K\le n$, so
$\sqrt{n\chi_T(n)}\le n^{1/p}\chi_T(n)^\rho$. The constant-comparator upper bound in \eqref{eq:known}, with $G=\sigma$, is therefore of the stated order. Its scale is at most a constant times the linear ceiling in this regime.
\end{proof}

The full-horizon requirement and the regime \eqref{eq:lower-regime} are part of the lower bound. The general tradeoff \eqref{eq:testing-lower} records the dependence on the allowed null regret, including the behavior as $p$ approaches one. At $p=2$ the logarithmic noise power is $1/2$; for $p<2$ it is $\rho<1/2$. The gap between the parameter-free $L^2$ upper cost and the supplied-statistics $\chi_T(n)$ cost therefore identifies calibration as a further question beyond the established statistical obstruction.

\section{Conclusions}\label{sec:conclusion}
One learner attains heavy-tailed dynamic regret on every fixed interval with an additive $\log^2 T$ adaptation cost and universal moment constants. Expected calibration and a local relative-entropy comparison support predictable sleeping experts, static comparators, and deterministic partitions. A shared nonuniform prior separates window location, duration, and restart complexity. With supplied statistics, the cost reduces to $1+\log(T/n)$; a regret--testing argument establishes its logarithmic noise power under a full-horizon optimality requirement. %Sharper calibration would bring the parameter-free guarantee closer to this statistical benchmark.

%\section*{AI Disclosure}
%Generative AI tools assisted with document editing.

\appendices

\section{Preliminaries and the base comparison}\label{app:prelim}
\subsection{Dyadic decomposition and moment conversion}
\begin{proof}[Proof of \eqref{eq:dyadic-power}]
Select the maximal untruncated dyadic intervals contained in $I$. Singletons guarantee coverage and nesting guarantees disjointness. A selected interval has a parent crossing an endpoint of $I$; at each scale at most one such selected child is associated with each endpoint. There are consequently at most two selected intervals of each size. Put $m=\lfloor\log_2|I|\rfloor$. For $\gamma\ge1/2$,
\begin{align*}
 \sum_J|J|^\gamma
 &\le2\sum_{k=0}^m2^{k\gamma}
 \le\frac{2}{1-2^{-\gamma}}|I|^\gamma\notag\\
 &\le(4+2\sqrt2)|I|^\gamma
 \le7|I|^\gamma.
\end{align*}
All selected windows belong to the labeled pool. Using untruncated windows in this decomposition avoids ambiguity between labels with coinciding truncated supports.
\end{proof}

\begin{lemma}[Predictably selected moments]\label{lem:moment}
Let $\beta_t\in\{0,1\}$ be predictable and $\sum_{t=1}^T\beta_t\le n$ almost surely, for deterministic $n\ge1$. Under \eqref{eq:model},
\begin{align}\label{eq:moment}
 \E\sqrt{\sum_t\beta_t\|g_t\|^2}
 &\le G\sqrt n+\sigma n^{1/p},\notag\\
 \E\max_{t:\beta_t=1}\|g_t\|
 &\le G+\sigma n^{1/p}.
\end{align}
The maximum of an empty set is zero. In particular, on any fixed set $S$ one may take $n=|S|$.
\end{lemma}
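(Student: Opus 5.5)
Both bounds follow from the decomposition $g_t=v_t+\epsilon_t$, with $\|v_t\|\le G$ and the conditional $p$th moment of $\epsilon_t$. The mean part is handled deterministically; the noise part reduces to the sum $\E\sum_t\beta_t\|\epsilon_t\|^p$, which predictability controls.

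\emph{First bound.} Minkowski's inequality in $\ell_2$ over the selected rounds gives
\[
\sqrt{\textstyle\sum_t\beta_t\|g_t\|^2}\le\sqrt{\textstyle\sum_t\beta_t\|v_t\|^2}+\sqrt{\textstyle\sum_t\beta_t\|\epsilon_t\|^2}.
\]
The first term is at most $G\sqrt n$ pathwise, by the budget $\sum_t\beta_t\le n$. For the second, since $p\le2$, the $\ell_2$ norm is dominated by the $\ell_p$ norm:
\[
\sqrt{\textstyle\sum\beta_t\|\epsilon_t\|^2}\le\bigl(\textstyle\sum\beta_t\|\epsilon_t\|^p\bigr)^{1/p}.
\]
Jensen's inequality, using concavity of $x\mapsto x^{1/p}$, then gives
\[
\E\bigl(\textstyle\sum\beta_t\|\epsilon_t\|^p\bigr)^{1/p}\le\bigl(\E\textstyle\sum\beta_t\|\epsilon_t\|^p\bigr)^{1/p}.
\]
Because each $\beta_t$ is $\F_{t-1}$-measurable, the tower property yields
\[
\E\textstyle\sum_t\beta_t\|\epsilon_t\|^p=\E\sum_t\beta_t\,\E[\|\epsilon_t\|^p\mid\F_{t-1}]\le\sigma^p\,\E\sum_t\beta_t\le\sigma^pn.
\]
This produces the term $\sigma n^{1/p}$ and completes the first bound.

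\emph{Second bound.} Pathwise, $\max_{t:\beta_t=1}\|g_t\|\le G+\max_{t:\beta_t=1}\|\epsilon_t\|$. We then bound the maximum by an $\ell_p$ sum,
\[
\max_{t:\beta_t=1}\|\epsilon_t\|\le\bigl(\textstyle\sum_t\beta_t\|\epsilon_t\|^p\bigr)^{1/p},
\]
and apply the same Jensen and tower argument to get the bound $\sigma n^{1/p}$. If no round is selected, the maximum is zero and the bound holds trivially.

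\emph{Fixed sets and possible obstacles.} For a fixed set $S$, take $\beta_t=\ind\{t\in S\}$; this is deterministic, hence predictable. The only points that need care are the interchange of sums and expectations and integrability. All terms are nonnegative, so monotone convergence justifies the interchange. Predictability is essential: it is what lets the random selection pass through the conditional expectation. I expect no step to be a genuine obstacle.
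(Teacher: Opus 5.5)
Your proof is correct and follows the paper's argument essentially step for step. Both use the product-space triangle inequality to split off $G\sqrt n$, dominate the $\ell_2$ norm (and the maximum) by the $\ell_p$ sum $\bigl(\sum_t\beta_t\|\epsilon_t\|^p\bigr)^{1/p}$, apply the tower property with predictable $\beta_t$ to get $\E\sum_t\beta_t\|\epsilon_t\|^p\le\sigma^pn$, and finish with Jensen for $x\mapsto x^{1/p}$.
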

\begin{proof}
The product-space triangle inequality and monotonicity of finite-dimensional scalar norms give, pathwise,
\[
 \sqrt{\sum_t\beta_t\|g_t\|^2}
 \le G\sqrt n+\left(\sum_t\beta_t\|\epsilon_t\|^p\right)^{1/p}.
\]
Predictability and conditional moments imply
$\E\sum_t\beta_t\|\epsilon_t\|^p\le\sigma^p\E\sum_t\beta_t\le\sigma^pn$.
Jensen's inequality proves the first claim. The maximum is bounded by
$G+(\sum_t\beta_t\|\epsilon_t\|^p)^{1/p}$, proving the second. For a fixed set this is the marginal-moment conversion used by \citet{liu2026,aggarwal2026}. Predictable selection is the point at which the present general statement uses conditional moments.
\end{proof}

\subsection{Proof of Proposition~\ref{prop:base}}
For an actual restart block $B=[s,e]$ or its prefix, the identical deterministic inequality in \citet[Lemma 4.3]{aggarwal2026} states
\begin{align}\label{eq:inherited-expert}
 &\sum_{t\in B}\langle g_t,x_t^{(j)}-u_t\rangle\notag\\
 &\quad\le\sqrt2\left[D\sqrt{V_B}
 +\sum_{t=s+1}^e\|u_t-u_{t-1}\|
 \sqrt{V_{B,<t}}\right]\notag\\
 &\quad\le\sqrt2(D+P_B)\sqrt{V_B},
\end{align}
where $V_B=\sum_{t\in B}\|g_t\|^2$ and $V_{B,<t}=\sum_{r=s}^{t-1}\|g_r\|^2$. It applies to arbitrary realized gradient vectors, so unbiasedness at the virtual trajectory is unnecessary. We refer to that lemma for the identical pathwise proof. The calculation below adapts its application to a sleeping window.

Fix $J\in\DD_k$, $n=|J|$, and $\Lambda=\Lambda_J$. The selected restart length
$\ell=2^{\lfloor\log_2(n/\Lambda)\rfloor}$ has $r=\ell\Lambda/n\in(1/2,1]$. Since $\ell\le n\le2^k$, its dyadic left endpoint aligns with $J$. Thus the restriction to $J$ has $K=\lceil n/\ell\rceil$ fresh blocks or prefixes, each of length at most $\ell$. Truncation at $T$ preserves alignment, and $\sum_BP_B\le P_J$. Applying \eqref{eq:inherited-expert} and Lemma~\ref{lem:moment} separately on these deterministic blocks gives
\begin{align*}
 \E\sum_{t\in J}\langle g_t,x_t^{(j^*)}-u_t\rangle
 &\le\sqrt2(G\sqrt\ell+\sigma\ell^{1/p})(DK+P_J)\\
 &\le\sqrt2D(G\sqrt\ell+\sigma\ell^{1/p})(n/\ell+\Lambda).
\end{align*}
The coefficients relative to the two terms in \eqref{eq:base-bound} are
$\sqrt2(1+r^{-1})r^{1/2}$ and $\sqrt2(1+r^{-1})r^{1/p}$. Since $r^{1/p}\le\sqrt r$ and $\sqrt r+1/\sqrt r\le3/\sqrt2$ on $[1/2,1]$, both are at most three. This proves Proposition~\ref{prop:base} and checks the additional sleeping-window alignment explicitly.

\subsection{Exact-gradient comparison in Table~\ref{tab:comparison}}\label{app:exact-comparison}
We adapt the normalization of \citet[Theorem 5]{zhang2020adaptive}; its regret theorem itself is used by reference. Suppose $G>0$ and exact selected subgradients are available. Fix $x_0\in\X$, work on the translated and rescaled domain $\Omega=(\X-x_0)/D$, and run AOA on the affine surrogate
\[
 \widehat\ell_t(z)=\frac12+\frac{\langle g_t,z\rangle}{2G},\qquad z\in\Omega.
\]
This domain contains zero and has diameter one. Since $\|z\|\le1$ and $\|g_t\|\le G$, the surrogate lies in $[0,1]$ and its gradient norm is at most $1/2$. One exact query at the played $x_t=x_0+Dz_t$ determines the entire surrogate for all AOA subroutines. Its pathwise theorem applies to this realized sequence of affine losses.

The rescaled comparator has path length $P_I/D$. Extend it constantly beyond the endpoint when using the source's endpoint convention. In its Theorem 5, the logarithmic quantity $c'(s)$ is $O(L)$ for $s\le T$, and $\log(k_I+1)=O(\sqrt L)$ because $P_I/D\le T-1$. Substituting gradient bound $1/2$ and diameter one therefore gives
\[
 \sum_{t\in I}[\widehat\ell_t(z_t)-\widehat\ell_t((u_t-x_0)/D)]
 \le C\sqrt n\,[\sqrt L+\sqrt{P_I/D}+1].
\]
For example, the latter logarithmic estimate follows from $k_I+1\le2+\tfrac12\log_2(1+4T/7)$ and $\log x\le\sqrt x$ for $x\ge1$. Multiplying by $2GD$ and applying the subgradient inequality yields
\[
 \Reg_I(u)\le C'GD\sqrt{n(\Lambda_I+L)}
 =C'\mathcal B_I(\Lambda_I+L)\big|_{\sigma=0}.
\]
For $G=0$ the regret is zero. This supplies the common scaling and the one-gradient surrogate interpretation used in the table; it is an application of the cited exact-feedback result.

\section{Score bounds and allocation identities}\label{app:score}
\subsection{The simultaneous tangent and logarithmic bounds}
For $F(z)=1+\clip(z,-1/2,1/2)$, the tangent excess equals $(-1/2-z)_+$ on $z<-1/2$ and is zero otherwise. On its support it is at most $|z|\le2z^2$. For $|z|\le1/2$, integration of $t/(1+t)$ gives
$0\le z-\log(1+z)\le z^2$. On $z\ge1/2$, the positive logarithmic defect is at most $z\le2z^2$. On $z\le-1/2$, it is $(z+\log2)_+$, bounded by $|z|$ and by $2z^2$. This proves the first bound in \eqref{eq:defect-bound} for both defects.

If $|z|\le1$, use
$q(a+z)\le2(a+z)^2\le4a^2+4|z|^p$.
If $|z|>1$, use
$q(a+z)\le|a|+|z|\le a^2+1/4+|z|\le4a^2+4|z|^p$.
Now write
\begin{equation}\label{eq:noise-decomp}
 r_{t,i}=\mu_{t,i}+\xi_{t,i},\quad
 \mu_{t,i}=\langle v_t,x_t-y_{t,i}\rangle,\quad
 \xi_{t,i}=\langle\epsilon_t,x_t-y_{t,i}\rangle.
\end{equation}
Feasibility gives $|\mu_{t,i}|\le H$, while conditional centering and the conditional $p$th moment give
$\E[\xi_{t,i}\mid\F_{t-1}]=0$ and
$\E[|\xi_{t,i}|^p\mid\F_{t-1}]\le S^p$.
Substitution into \eqref{eq:defect-bound} proves \eqref{eq:h} for every predictable rate.

The score is related to bounded multiplicative updates, but its expectation need not be at most one for a centered input. For example, $Z=-2$ with probability $1/9$ and $Z=1/4$ otherwise has $\E Z=0$ and $\E F(Z)=7/6$. The calibration responds to this positive drift rather than assuming symmetry or treating truncation as unbiased.

\subsection{Complete allocation identities}
Suppress the time index. All $u_{i,l},A_l,a_l,\eta_l$ below are pre-round quantities. Prediction \eqref{eq:prediction} gives
\[
 \sum_{i,l\text{ awake}}u_{i,l}z_{i,l}
 =\left\langle g,x\sum_{i,l}u_{i,l}\eta_l-
 \sum_{i,l}u_{i,l}\eta_l y_i\right\rangle=0.
\]
Sleeping wealths do not change, so the potential ratio is
\begin{align*}
 \frac{\Phi^+}{\Phi^-}
 &=1+\sum_{i,l\text{ awake}}u_{i,l}[F(z_{i,l})-1-2a_l]\\
 &=1+C-\Gamma,\notag\\
 C&=\sum_{i,l\text{ awake}}
 u_{i,l}[F(z_{i,l})-1-z_{i,l}],\notag\\
 \Gamma&=\sum_l2a_lA_l.
\end{align*}
Thus this $C$ agrees with the quantity computed from the potential in \eqref{eq:allocation}. Since $F\in[1/2,3/2]$ and all divisors are at least one, $0<\Phi^+/\Phi^-\le3/2$. Also
\[
 0\le\Gamma=\sum_{i,l\text{ awake}}\frac{q_{i,l}W(i,l)}{\Phi^-}
 \frac{2a_l}{1+2a_l}\le1.
\]
It follows that $C\le3/2$, while $C_+\le d$ because $d$ sums the positive parts of the terms defining $C$. Therefore $d=0$ implies $C_+=0$, and the zero convention for allocation is valid. Further,
$0\le Z_l\le d_l$ and $\sum_lZ_l=C_+\le3/2<2$.
Taking conditional expectations gives $\E Z_l\le A_lh(\eta_l)$ by \eqref{eq:h}. Finally,
\[
 \log(\Phi^+/\Phi^-)\le C-\Gamma\le C_+-\Gamma
 =\sum_l(Z_l-2a_lA_l).
\]
These are all assertions in \eqref{eq:response}. The allocation can depend on the current gradient because $Z_l\le d_l$ is a pathwise domination by quantities whose coefficients are predictable.

\section{Proof of the general sleeping-master theorem}\label{app:master}
Fix a deterministic distribution $\nu$ as in Theorem~\ref{thm:master}, write $n=n_\nu$, and define
\begin{align}\label{eq:proof-constants}
 m_0&=9,\notag\\
 K_\nu&=1+\KL(\nu\|\pi)+\log(Q+1)\notag\\
 &\quad +(Q+1)(B_T+1)(\kappa+2),\notag\\
 \mathcal A_\nu&=m_0K_\nu.
\end{align}
Then $K_\nu\le C[1+\KL(\nu\|\pi)+L^2]$. Extend sleeping predictions by $x_0$ and put
\begin{align}\label{eq:weighted-comparison}
 w_t&=\sum_i\nu_i\alpha_{t,i},\notag\\
 a_t&=\sum_i\nu_i\alpha_{t,i}(x_t-y_{t,i}),
 &\|a_t\|&\le Dw_t,\notag\\
 \bar r_t&=\langle g_t,a_t\rangle,
 &\bar\mu_t&=\langle v_t,a_t\rangle,\notag\\
 \bar\xi_t&=\langle\epsilon_t,a_t\rangle.
\end{align}
All coefficients are predictable; $0\le w_t\le1$ and $\sum_t w_t\le n$ almost surely. In particular,
\begin{align}\label{eq:weighted-moments}
 |\bar\mu_t|&\le Hw_t,\notag\\
 \E[\bar\xi_t\mid\F_{t-1}]&=0,\notag\\
 \E[|\bar\xi_t|^p\mid\F_{t-1}]
 &\le S^pw_t^p\le S^pw_t.
\end{align}
This gives the sure upper bound $Hn$ on the true comparison.

If $G=0$, the true comparison is zero. If $\sigma>GT$, then $Hn\le Sn^{1/p}$, since $n^\rho\le T$. If $\mathcal A_\nu\ge n$, then $Hn\le H\sqrt{n\mathcal A_\nu}$. Each of these cases implies \eqref{eq:master-bound}. For the remaining proof suppose
\begin{equation}\label{eq:nontrivial}
 G>0,\qquad \sigma\le GT,\qquad 1\le\mathcal A_\nu<n\le T.
\end{equation}

\subsection{The signed comparison at small scales}
Let $b_t$ be the pre-round scale and $M_t=\max_{r\le t}\|g_r\|$, with $M_0=0$. Induction from \eqref{eq:trigger} gives
\begin{equation}\label{eq:scale-invariant}
 b_t\ge M_{t-1}/T^3.
\end{equation}
For a nontrigger observation, $\|g_t\|\le T^3b_t$. At a trigger, the new value $2\|g_t\|/T^2$ exceeds the previous scale and $\|g_t\|/T^3$. These two facts prove the induction, including the initial zero scale.

If $b_t<G/T^4$, then $M_{t-1}<G/T$. Once such a round observes $\|g_t\|\ge G/T$, every subsequent round has scale at least $G/T^4$. Thus, pathwise,
\[
 \sum_t w_t\|g_t\|\ind\{b_t<G/T^4\}
 \le Gn/T+\max_t w_t\|g_t\|.
\]
The weighted maximum obeys
\[
 \max_t w_t\|g_t\|
 \le G+\left(\sum_t w_t\|\epsilon_t\|^p\right)^{1/p}.
\]
Indeed, $w_t^p\le w_t$ and $\|v_t\|\le G$. By predictability,
$\E\sum_t w_t\|\epsilon_t\|^p\le\sigma^p\E\sum_t w_t\le\sigma^pn$.
Jensen's inequality gives $\E\max_t w_t\|g_t\|\le G+\sigma n^{1/p}$.
Both the weighted comparison and the small-scale test are predictable before the oracle draw. Conditional unbiasedness and $\|a_t\|\le Dw_t$ consequently imply
\begin{align}\label{eq:smallscale}
 \E\sum_t\bar\mu_t\ind\{b_t<G/T^4\}
 &=\E\sum_t\bar r_t\ind\{b_t<G/T^4\}\notag\\
 &\le2H+Sn^{1/p}.
\end{align}
The signed comparison is bounded before introducing the exceptional event used below. This order retains any negative contribution on its complement.

\subsection{A constant number of relevant phases}
A positive-scale phase begins just after a trigger observation and ends with the next trigger, when one occurs. Its scale is known before its first observation. Consider phases whose scale satisfies $b\ge G/T^4$. Each scale increase is by a factor greater than $2T$, and every positive scale is $2\|g_t\|/T^2$ for an earlier trigger observation.

Lemma~\ref{lem:moment} and \eqref{eq:nontrivial} give
\[
 \E M_T\le G+\sigma T^{1/p}\le2GT^2.
\]
Define
\begin{equation}\label{eq:good-event}
 \EE=\{M_T\le GT^5\},\qquad \Pr(\EE^c)\le2T^{-3}.
\end{equation}
On $\EE$, every phase under consideration has scale in $[G/T^4,2GT^3]$. Growth by more than $2T$ permits at most $m_0=9$ such phases. On the whole probability space, geometric growth also gives
\begin{equation}\label{eq:scale-sum}
 \sum_{\text{phases with }b\ge G/T^4}b\le4M_T/T^2.
\end{equation}
Select the first $m_0$ phases with $b\ge G/T^4$, even on trajectories that contain more. Membership in a selected phase is predictable. On $\EE$, these selected phases include every round outside the small-scale comparison.

\subsection{Predictable rate selection and expected penalties}
Fix the deterministic proof target
\begin{equation}\label{eq:target}
 \eta_* =\min\left\{\sqrt{\frac{\mathcal A_\nu}{400H^2n}},\quad
 \left(\frac{\mathcal A_\nu}{400S^pn}\right)^{1/p}\right\},
\end{equation}
with the noise entry infinite for $S=0$. Both entries are at least $1/(400HT^2)$ under \eqref{eq:nontrivial}. For a selected phase, its smallest grid rate satisfies
\[
 \eta_Q\le\frac{T^4}{8H}(2T)^{-8}
 =\frac{1}{2048HT^4}\le\frac{1}{400HT^2}\le\eta_*.
\]
Choose the largest grid rate at most $\eta_*$; when $\eta_0<\eta_*$ choose $\eta_0$. The choice is measurable at the phase start, and
\begin{equation}\label{eq:grid}
 0<\eta\le\eta_*,\qquad 1/\eta\le2/\eta_*+8Db.
\end{equation}

For selected phase $h$, apply Lemma~\ref{lem:overshoot} to its chosen rate and put
\[
 U_h=(a_{\max,h}-\delta)_+/h(\eta_h)
\]
when $h(\eta_h)>0$, with value zero if $h(\eta_h)=0$ or the phase never begins. Conditional on an existing phase's start, $\E U_h\le4$. Therefore
\begin{equation}\label{eq:overshoot-sum}
 R_*:=\sum_{h=1}^{m_0}U_h,\qquad \E R_*\le36,
 \qquad a_{t,l_h}\le\delta+R_*h(\eta_h)
\end{equation}
on every selected phase. The bound applies to the predictably selected rate, and the initial penalty floor remains separate from its random excess.

\subsection{Relative-entropy extraction on a phase}
Fix a selected phase and its rate index $l$. Every wealth starts at one at the phase reset, and $W(i,l)$ is updated on exactly the awake rounds of record $i$. Write $W_i$ for its value at the phase endpoint before a reset. The standard relative-entropy variational inequality follows directly from Jensen's inequality:
\begin{align}\label{eq:variational-full}
 \sum_i\nu_i\log W_i
 &=\KL(\nu\|\pi)
   +\sum_{i:\nu_i>0}\nu_i\log\frac{\pi_iW_i}{\nu_i}\notag\\
 &\le\KL(\nu\|\pi)+\log\sum_{i:\nu_i>0}\pi_iW_i\notag\\
 &\le\KL(\nu\|\pi)+\log(Q+1)+\log\Phi.
\end{align}
The final step uses $q_{i,l}=\pi_i/(Q+1)$ and positivity of all wealths. This is the usual mixture-comparison step in prior-sensitive expert aggregation \citep{luo2015}; the following calculation incorporates the finite-moment calibration and stopping phases in full.

Let $n_h=\sum_{t\text{ in phase}}w_t$ be the realized weighted activity of this phase. Since the chosen $\eta$ is constant over it, the wealth recursion and $\eta r\le\log F(\eta r)+\defect(\eta r)$ imply
\begin{align}
 \sum_{t\text{ in phase}}\bar r_t
 \le{}&\frac{K_\nu}{\eta}
 +\sum_{t\text{ in phase}}\sum_i\nu_i\alpha_{t,i}
       \frac{\defect(\eta r_{t,i})}{\eta}\notag\\
 &+8R_*n_hH^2\eta+8R_*n_hS^p\eta^{p-1}.\label{eq:phase-extract}
\end{align}
To verify the penalty term, \eqref{eq:overshoot-sum} gives
\[
 \log(1+2a_{t,l})\le2\delta+2R_*h(\eta)
 =2\delta+8R_*\eta^2H^2+8R_*\eta^pS^p.
\]
Its contribution in the averaged log wealth is multiplied by $w_t$. The added one in $K_\nu$ covers $2n_h\delta\le2T^{-5}<1$, and Lemma~\ref{lem:potential} bounds the remaining log-potential in \eqref{eq:variational-full}. All these inequalities hold pathwise at every possible phase endpoint. In particular, the argument keeps the comparison distribution inside logarithmic extraction, preserving the $\KL(\nu\|\pi)$ cost.

\subsection{Unconditional defects, penalties, and the martingale term}
Let $\theta_t$ indicate membership in one of the first $m_0$ selected phases. This is predictable. On those phases let $\eta_t$ be the chosen rate; set it to one elsewhere. By \eqref{eq:h}, predictability, and $\sum_t\theta_tw_t\le n$,
\begin{equation}\label{eq:defect-expectation}
 \E\sum_t\theta_t\sum_i\nu_i\alpha_{t,i}
 \frac{\defect(\eta_t r_{t,i})}{\eta_t}
 \le4nH^2\eta_*+4nS^p\eta_*^{p-1}.
\end{equation}
Likewise, $\sum_hn_h\le n$ pathwise, so \eqref{eq:overshoot-sum} gives
\begin{equation}\label{eq:penalty-expectation}
 \E\left[8R_*\sum_hn_h(H^2\eta_h+S^p\eta_h^{p-1})\right]
 \le288n(H^2\eta_*+S^p\eta_*^{p-1}).
\end{equation}
The almost-sure activity budget controls the dependence between $R_*$ and the random activity counts.

The single martingale sum $M=\sum_t\theta_t\bar\xi_t$ obeys
\begin{equation}\label{eq:martingale}
 \E|M|^p\le2S^pn,\qquad \E|M|\le2Sn^{1/p}.
\end{equation}
Here is a direct finite-moment proof. For real $a,b$ and $1<p\le2$,
\[
 |a+b|^p\le|a|^p+p\operatorname{sgn}(a)|a|^{p-1}b+2|b|^p.
\]
The derivative $p\operatorname{sgn}(x)|x|^{p-1}$ is $(p-1)$-H\"older with coefficient $p2^{2-p}$; integration of its difference proves the displayed inequality. Repeated conditional expectation cancels the linear term for the martingale increments. Using \eqref{eq:weighted-moments} yields
\[
 \E|M|^p\le2\sum_t\E|\theta_t\bar\xi_t|^p
 \le2S^p\E\sum_t\theta_tw_t\le2S^pn.
\]
H\"older's inequality justifies integrability of the linear term at each induction step. Jensen gives the second bound in \eqref{eq:martingale}. Aggregating the selected phases before taking the absolute value keeps a single finite-moment term.

\subsection{Completion of Theorem~\ref{thm:master}}
On $\EE$, selected phases contain all rounds with $b_t\ge G/T^4$. By \eqref{eq:grid} and \eqref{eq:scale-sum}, the sum of their first terms in \eqref{eq:phase-extract} is at most
\[
 \frac{2\mathcal A_\nu}{\eta_*}+32DK_\nu M_T/T^2.
\]
Subtracting $M$ converts their observed comparison to the true comparison, at cost at most $|M|$. All terms on this upper-bound side are nonnegative, so expectations restricted to $\EE$ are bounded by their unconditional expectations in \eqref{eq:defect-expectation}--\eqref{eq:martingale}. On $\EE^c$, the true comparison over rounds with $b_t\ge G/T^4$ is at most $Hn$. Combining these facts with the unconditional signed small-scale bound \eqref{eq:smallscale} gives, using $292\le400$,
\begin{align}
 \E\sum_t\bar\mu_t
 \le{}&2H+3Sn^{1/p}
 +\frac{2\mathcal A_\nu}{\eta_*}\notag\\
 &+400nH^2\eta_*+400nS^p\eta_*^{p-1}\notag\\
 &+32K_\nu\frac{H+ST^{1/p}}{T^2}
 +\frac{2Hn}{T^3}.\label{eq:preoptimized}
\end{align}

Put $A=\mathcal A_\nu$. The inverse of the minimum in \eqref{eq:target} is at most the sum of its two inverses. The three principal terms in \eqref{eq:preoptimized} are therefore bounded by
\[
 60H\sqrt{nA}+1200Sn^{1/p}A^\rho.
\]
For the mean term, the two coefficients are $2\sqrt{400}$ and $\sqrt{400}$. For the noise term their sum is $3\cdot400^{1/p}\le1200$. These constants are universal over $p$.

Under \eqref{eq:nontrivial}, $K_\nu\le T$ and $T^{1/p}\le T$. The residual scale terms are bounded by a universal multiple of $H+S$, and the two principal terms absorb all remaining terms because $n,A\ge1$. Finally $A\le C\mathcal K_\nu$ with a universal constant, and $\rho\le1/2$. This proves the nontrivial case of \eqref{eq:master-bound}. The sure bound and the initial case split complete the proof for every fixed $\nu$. Choosing a point mass gives the individual-record statement.

\section{The supplied-statistics algorithm and proof}\label{app:known}
\subsection{The common nonuniform prior and one rate per record}
Assume $G>0$ and that $G,\sigma,p$ are supplied. Use the same records, base trajectories, and prior $\pi$ as in \eqref{eq:pool}. Proposition~\ref{prop:prior} gives their normalization and selected-record cost. With these statistics supplied, each record uses one deterministic rate and its corresponding moment bound.

For record $i=(j,J)$ define deterministic quantities
\begin{align}
 n_i&=|J|, & k_i&=1+\log(1/\pi_i),\notag\\
 \eta_i&=\min\!\biggl\{\sqrt{\frac{k_i}{12H^2n_i}},\notag\\
 &\hspace{19mm}
 \left(\frac{k_i}{12S^pn_i}\right)^{1/p}\biggr\},\notag\\
 h_i&=4\eta_i^2H^2+4\eta_i^pS^p.
 \label{eq:known-rates}
\end{align}
where the noise entry is infinite for $S=0$. Initialize wealths $W_i=1$ and $\Phi=\sum_i\pi_iW_i=1$. On each round, for awake records use
\begin{align}\label{eq:known-update}
 u_{t,i}&=\frac{\pi_iW_i}{\Phi^-(1+2h_i)},\notag\\
 x_t&=\frac{\sum_{i\text{ awake}}u_{t,i}\eta_i y_{t,i}}
 {\sum_{i\text{ awake}}u_{t,i}\eta_i},\notag\\
 W_i^+&=W_i\frac{F(\eta_i r_{t,i})}{1+2h_i}.
\end{align}
All other wealths remain unchanged. The known penalties remain fixed throughout the run. Unborn weights are one and expired contributions are frozen. The base trajectories are still \eqref{eq:base-update}. Each record uses its prescribed support length; the learner maintains all records regardless of which interval is later evaluated.

There are at most $(N+1)(N+2)/2$ awake records and $N+1$ base trajectories. Thus the algorithm has the operation and storage counts in Theorem~\ref{thm:known}, with one shared oracle draw. All rates and weights are predictable.

\subsection{Expected logarithmic potential}
By the tangent bound and \eqref{eq:h},
\[
 \E[F(\eta_i r_{t,i})\mid\F_{t-1}]
 \le1+\eta_i\mu_{t,i}+h_i.
\]
The mixture in \eqref{eq:known-update} cancels $\sum_i u_{t,i}\eta_i\mu_{t,i}$. The complete potential therefore obeys
\begin{equation}\label{eq:known-supermartingale}
 \E[\Phi^+\mid\F_{t-1}]
 \le\Phi^-\left(1-\sum_{i\text{ awake}}u_{t,i}h_i\right)\le\Phi^-.
\end{equation}
It is a nonnegative supermartingale starting at one. Since the finite collection of $h_i$ is deterministic and finite, and $F\in[1/2,3/2]$, the logarithms of all wealths and of the potential are bounded in absolute value by finite deterministic constants over the fixed run. Consequently Jensen's inequality is justified at a fixed endpoint:
\begin{equation}\label{eq:known-log}
 \E\log\Phi_{\max J}\le\log\E\Phi_{\max J}\le0.
\end{equation}

For record $i=(j,J)$, wealth is one at birth and is updated only on $J$. Using $\eta_i r\le\log F(\eta_i r)+\defect(\eta_i r)$ and $\pi_iW_i\le\Phi$, we get
\begin{align*}
 \eta_i\sum_{t\in J}r_{t,i}
 &\le\log\Phi_{\max J}+\log(1/\pi_i)\notag\\
 &\quad+\sum_{t\in J}\defect(\eta_i r_{t,i})
 +n_i\log(1+2h_i).
\end{align*}
Take expectations, apply conditional unbiasedness, \eqref{eq:known-log}, the expected defect bound $h_i$, and $\log(1+2h_i)\le2h_i$. This yields
\begin{align*}
 \E\sum_{t\in J}\mu_{t,i}
 &\le\frac{k_i}{\eta_i}+12n_iH^2\eta_i
 +12n_iS^p\eta_i^{p-1}\notag\\
 &\le7H\sqrt{n_ik_i}+24Sn_i^{1/p}k_i^\rho.
\end{align*}
For the last step use the two balancing rates in \eqref{eq:known-rates}; the coefficients are at most $2\sqrt{12}<7$ and $2\cdot12^{1/p}\le24$. This expected-potential argument applies for every $\sigma/G\ge0$.

\subsection{Selected prior cost and a weighted cover}
For each untruncated piece $J$ in the maximal cover of $I$, Proposition~\ref{prop:prior} gives
\begin{equation}\label{eq:known-prior-cost}
 k_{(j^*,J)}\le c_T(|J|,\Lambda_J)
 \le C[\chi_T(|J|)+\Lambda_J].
\end{equation}
For the second inequality, use $\log(3+z)\le\log3+z/3$ for $z\ge0$, and $\log\lambda\le\lambda-1$ for $\lambda\ge1$. Thus the scale term is bounded by a constant times $\chi_T(|J|)$ and the restart term by a constant times $\Lambda_J$. Taking the power $\rho$ after this addition keeps the constants universal as $p$ approaches one.

\begin{lemma}[Weighted geometric cover]\label{lem:weighted-cover}
For the maximal dyadic decomposition of an interval of length $n$, every $\gamma\in[1/2,1]$ and $a\in[0,1/2]$ satisfy
\begin{align}
 \sum_J|J|^\gamma\chi_T(|J|)^a
 &\le32n^\gamma\chi_T(n)^a,\label{eq:weighted-cover}\\
 \sum_J|J|^\gamma c_T(|J|,\Lambda_J)^a
 &\le32n^\gamma c_T(n,\Lambda_I)^a.\label{eq:prior-weighted-cover}
\end{align}
\end{lemma}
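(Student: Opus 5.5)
The plan is to use the structure of the maximal dyadic cover from the proof of \eqref{eq:dyadic-power}: at most two selected windows of each size $2^k$, $0\le k\le m=\lfloor\log_2 n\rfloor$. Both inequalities then reduce to bounding a one-dimensional geometric sum,
\[
 2\sum_{k=0}^m2^{k\gamma}\phi(2^k)^a,
\]
by a constant times $n^\gamma\phi(n)^a$. Here $\phi=\chi_T$ for \eqref{eq:weighted-cover}. For \eqref{eq:prior-weighted-cover}, $\phi(2^k)$ is $c_T(2^k,\Lambda_J)$ with $\Lambda_J\le\Lambda_I$. Since $c_T$ is increasing in $\lambda$, every restart term can first be replaced by its value at $\Lambda_I$. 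The restart entry is therefore a common additive constant, and only the dependence on the window length remains.

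Write $j=m-k$, so that $n/2^k\le 2^{j+1}$. For \eqref{eq:weighted-cover}, this gives
\[
 \chi_T(2^k)=1+\log(T/n)+\log(n/2^k)\le\chi_T(n)+(j+1)\log2\le(j+2)\chi_T(n),
\]
using $\chi_T(n)\ge1$. Hence
\[
 2^{k\gamma}\chi_T(2^k)^a\le n^\gamma 2^{-j\gamma}(j+2)^a\le n^\gamma2^{-j/2}(j+2)^{1/2},
\]
since $\gamma\ge1/2$ and $a\le1/2$. The resulting series $2\sum_{j\ge0}2^{-j/2}\sqrt{j+2}$ is a universal constant. I will check numerically that it is at most $32$; a crude bound gives roughly $2\cdot11\approx22$.

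For \eqref{eq:prior-weighted-cover}, the same argument should go through by comparing $c_T(2^k,\lambda)$ with $c_T(n,\lambda)$ for fixed $\lambda=\Lambda_I$. The location part grows additively by at most $(j+1)\log2$. For the scale part I will use
\[
 2\log(3+\log_2(T/n)+j+1)\le2\log(3+\log_2(T/n))+2\log(1+(j+1)/3),
\]
which follows from $3+x+y\le(3+x)(1+y/3)$. The second term is at most $2(j+1)/3$. Because $c_T(n,\lambda)\ge1+\log2\ge1$, this yields
\[
 c_T(2^k,\lambda)\le c_T(n,\lambda)+(j+1)(\log2+2/3)\le(2j+3)\,c_T(n,\lambda).
\]
The same geometric series then applies with $(2j+3)^{1/2}$ in place of $(j+2)^{1/2}$.

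The main obstacle is purely numerical: showing that the constant $2\sum_j2^{-j/2}\sqrt{2j+3}$ stays within $32$. A rough estimate suggests it lies in the high twenties, so I will bound the first few terms explicitly and the tail by an integral. If the margin is too tight, I will sharpen the comparison, for instance with a factor $1+1.4(j+1)$ in place of $2j+3$.

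A second point needs care. Truncated windows never appear in the maximal cover, as established in the proof of \eqref{eq:dyadic-power}, so $c_T$ is evaluated only at lengths $2^k\le n$. In addition, the restart monotonicity $\Lambda_J\le\Lambda_I$ holds because $P_J\le P_I$.
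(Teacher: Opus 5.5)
Your proof is correct and follows essentially the paper's argument: at most two pieces per dyadic length; an additive comparison $\chi_T(|J|)\le\chi_T(n)+(j+1)\log2$, and its analogue for $c_T$ after using monotonicity in $\lambda$ and $\Lambda_J\le\Lambda_I$; conversion to a multiplicative factor because the reference complexity is at least one; and a geometric series in $2^{-j/2}$. The only differences are cosmetic: you bound the scale term via $3+x+y\le(3+x)(1+y/3)$ instead of the paper's derivative bound, your first display drops a factor $\chi_T(n)^a$ (a typo), and no numerical check is needed since $\sqrt{2j+3}\le j+2$ gives the closed form $2\sum_{j\ge0}2^{-j/2}(j+2)=16+10\sqrt2<32$.
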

\begin{proof}
Put $m=\lfloor\log_2n\rfloor$. At length $2^{m-r}$ there are at most two pieces, with $|J|/n\le2^{-r}$ and $\log(n/|J|)<(r+1)\log2$. For \eqref{eq:weighted-cover},
\[
 \chi_T(|J|)\le\chi_T(n)+(r+1)\log2.
\]
For \eqref{eq:prior-weighted-cover}, $\Lambda_J\le\Lambda_I$. Regarding the location-dependent part of $c_T$ as a function of $x=\log(T/n)$, its derivative is
\[
 1+\frac{2}{3\log2+x}<2\qquad(x\ge0).
\]
Monotonicity in $\lambda$ and integration of this derivative give
\begin{align*}
 c_T(|J|,\Lambda_J)
 &\le c_T(n,\Lambda_I)+2\log(n/|J|)\notag\\
 &\le c_T(n,\Lambda_I)+2(r+1)\log2.
\end{align*}
Both reference complexities are at least one. After dividing by the corresponding right-hand scale, the contribution of a piece is at most
$2^{-r/2}\sqrt{1+2(r+1)\log2}\le2^{-r/2}(r+2)$.
Summing at most two pieces of each length yields
\[
 2\sum_{r\ge0}2^{-r/2}(r+2)=16+10\sqrt2<32.
\]
This proves both inequalities with the stated universal constant.
\end{proof}
The geometric-cover argument is the usual reduction of interval comparisons to dyadic pieces \citep{daniely2015,jun2017}, with the prior-dependent weight retained here. Combining the individual-record bound, Proposition~\ref{prop:base}, and \eqref{eq:prior-weighted-cover} proves the refined bound \eqref{eq:known-refined}. To obtain the separated form in Theorem~\ref{thm:known}, first use \eqref{eq:known-prior-cost} on each piece, and then \eqref{eq:weighted-cover} with $(\gamma,a)=(1/2,1/2)$ and $(1/p,\rho)$. The movement sums are exactly those in the proof of Theorem~\ref{thm:main}. Finally $x^a+y^a\le2(x+y)^a$ gives \eqref{eq:known}.

\subsection{Proof of Corollary~\ref{cor:known-static}}
Use the root record $(N,[T])$. Its trajectory has one uninterrupted AdaGrad block, its prior mass is at least $1/4$, and a constant comparator has zero internal movement. The identical base inequality \eqref{eq:inherited-expert} and the moment bound give base regret at most $\sqrt2(H\sqrt T+ST^{1/p})$. The record comparison above has $k_i\le1+\log4$, so the total is at most
\[
 \bigl[\sqrt2+7\sqrt{1+\log4}\bigr]H\sqrt T
 +\bigl[\sqrt2+24(1+\log4)^\rho\bigr]ST^{1/p}.
\]
Both coefficients are at most $40$ for $0<\rho\le1/2$. Linearization and the regret ceiling complete the proof. The same learner and oracle observations support all the interval guarantees in Theorem~\ref{thm:known}.


\begin{thebibliography}{19}
\providecommand{\natexlab}[1]{#1}

\bibitem[Aggarwal(2026)]{aggarwal2026}
V. Aggarwal.
\newblock Parameter-free dynamic regret under heavy-tailed noise.
\newblock \emph{arXiv preprint arXiv:2607.27073}, 2026.

\bibitem[Cutkosky(2020)]{cutkosky2020}
A. Cutkosky.
\newblock Parameter-free, dynamic, and strongly-adaptive online learning.
\newblock In \emph{Proceedings of the International Conference on Machine Learning}, 2020.

\bibitem[Daniely et~al.(2015)Daniely, Gonen, and Shalev-Shwartz]{daniely2015}
A. Daniely, A. Gonen, and S. Shalev-Shwartz.
\newblock Strongly adaptive online learning.
\newblock In \emph{Proceedings of the International Conference on Machine Learning}, 2015.

\bibitem[de~Rooij et~al.(2014)de~Rooij, van Erven, Gr\"unwald, and Koolen]{derooij2014}
S. de~Rooij, T. van Erven, P. D. Gr\"unwald, and W. M. Koolen.
\newblock Follow the leader if you can, {Hedge} if you must.
\newblock \emph{Journal of Machine Learning Research}, 15:1281--1316, 2014.

\bibitem[Duchi et~al.(2011)Duchi, Hazan, and Singer]{duchi2011}
J. Duchi, E. Hazan, and Y. Singer.
\newblock Adaptive subgradient methods for online learning and stochastic optimization.
\newblock \emph{Journal of Machine Learning Research}, 12:2121--2159, 2011.

\bibitem[Gaillard et~al.(2014)Gaillard, Stoltz, and van Erven]{gaillard2014}
P. Gaillard, G. Stoltz, and T. van Erven.
\newblock A second-order bound with excess losses.
\newblock In \emph{Proceedings of the Conference on Learning Theory}, 2014.

\bibitem[Hazan and Seshadhri(2009)]{hazan2009}
E. Hazan and C. Seshadhri.
\newblock Efficient learning algorithms for changing environments.
\newblock In \emph{Proceedings of the International Conference on Machine Learning}, 2009.

\bibitem[Howard et~al.(2020)Howard, Ramdas, McAuliffe, and Sekhon]{howard2020}
S. R. Howard, A. Ramdas, J. McAuliffe, and J. Sekhon.
\newblock Time-uniform {Chernoff} bounds via nonnegative supermartingales.
\newblock \emph{Probability Surveys}, 17:257--317, 2020.

\bibitem[Jun et~al.(2017)Jun, Orabona, Wright, and Willett]{jun2017}
K.-S. Jun, F. Orabona, S. Wright, and R. Willett.
\newblock Improved strongly adaptive online learning using coin betting.
\newblock In \emph{Proceedings of the International Conference on Artificial Intelligence and Statistics}, 2017.

\bibitem[Liu(2026)]{liu2026}
Z. Liu.
\newblock Online convex optimization with heavy tails: Old algorithms, new regrets, and applications.
\newblock In \emph{Proceedings of the International Conference on Algorithmic Learning Theory}, 2026.

\bibitem[Luo and Schapire(2015)]{luo2015}
H. Luo and R. E. Schapire.
\newblock Achieving all with no parameters: {AdaNormalHedge}.
\newblock In \emph{Proceedings of the Conference on Learning Theory}, 2015.

\bibitem[Moulin et~al.(2025)Moulin, Esposito, and van der Hoeven]{moulin2025}
A. Moulin, E. Esposito, and D. van der Hoeven.
\newblock When lower-order terms dominate: Adaptive expert algorithms for heavy-tailed losses.
\newblock \emph{arXiv preprint arXiv:2506.01722}, 2025.

\bibitem[Orabona and P\'al(2018)]{orabona2018}
F. Orabona and D. P\'al.
\newblock Scale-free online learning.
\newblock \emph{Theoretical Computer Science}, 716:50--69, 2018.

\bibitem[Vural et~al.(2022)Vural, Yu, Balasubramanian, Volgushev, and Erdogdu]{vural2022}
N. M. Vural, L. Yu, K. Balasubramanian, S. Volgushev, and M. A. Erdogdu.
\newblock Mirror descent strikes again: Optimal stochastic convex optimization under infinite noise variance.
\newblock In \emph{Proceedings of the Conference on Learning Theory}, 2022.

\bibitem[Xie et~al.(2026)Xie, Wang, Zhao, and Zhou]{xie2026}
Y.-F. Xie, S. Wang, P. Zhao, and Z.-H. Zhou.
\newblock Online learning with gradient-variation interval regret.
\newblock \emph{arXiv preprint arXiv:2606.03831}, 2026.

\bibitem[Zhang et~al.(2018)Zhang, Lu, and Zhou]{zhang2018}
L. Zhang, S. Lu, and Z.-H. Zhou.
\newblock Adaptive online learning in dynamic environments.
\newblock In \emph{Advances in Neural Information Processing Systems}, 2018.

\bibitem[Zhang et~al.(2020)Zhang, Lu, and Yang]{zhang2020adaptive}
L. Zhang, S. Lu, and T. Yang.
\newblock Minimizing dynamic regret and adaptive regret simultaneously.
\newblock In \emph{Proceedings of the International Conference on Artificial Intelligence and Statistics}, 2020.

\bibitem[Zhang and Cutkosky(2022)]{zhangcutkosky2022}
J. Zhang and A. Cutkosky.
\newblock Parameter-free regret in high probability with heavy tails.
\newblock In \emph{Advances in Neural Information Processing Systems}, 2022.

\bibitem[Zinkevich(2003)]{zinkevich2003}
M. Zinkevich.
\newblock Online convex programming and generalized infinitesimal gradient ascent.
\newblock In \emph{Proceedings of the International Conference on Machine Learning}, 2003.

\end{thebibliography}
\end{document}